\definecolor{darkblue}{RGB}{0,0,139}
\definecolor{lightblue}{RGB}{173,216,230}
\definecolor{darkred}{RGB}{139,0,0}
\definecolor{lightred}{RGB}{255,204,204}
\newcommand{\poscolor}[1]{%
    \cellcolor{blue!\the\numexpr100*#1\relax!white} #1%
}
\newcommand{\negcolor}[1]{%
    \cellcolor{red!\the\numexpr-100*#1\relax!white} #1%
}
\DeclareMathOperator*{\dooperator}{do}
\theoremstyle{plain}
\newtheorem{lemma}{Lemma}
\newtheorem{corollary}{Corollary}
\theoremstyle{definition}
\newtheorem{definition}{Definition}
\theoremstyle{remark}
\title{Breaking Habits: On the Role of the Advantage \\ Function in Learning Causal State Representations}
\author{%
  Miguel Suau \\
  Phaidra \\
  \texttt{miguel.suau@phaidra.ai} \\
  % examples of more authors
  % \And
  % Coauthor \\
  % Affiliation \\
  % Address \\
  % \texttt{email} \\
  % \AND
  % Coauthor \\
  % Affiliation \\
  % Address \\
  % \texttt{email} \\
  % \And
  % Coauthor \\
  % Affiliation \\
  % Address \\
  % \texttt{email} \\
  % \And
  % Coauthor \\
  % Affiliation \\
  % Address \\
  % \texttt{email} \\
}
\begin{document}

\maketitle

\begin{abstract}
  Recent work has shown that reinforcement learning agents can develop policies that exploit spurious correlations between rewards and observations. This phenomenon, known as policy confounding, arises because the agent’s policy influences both past and future observation variables, creating a feedback loop that can hinder the agent's ability to generalize beyond its usual trajectories. In this paper, we show that the advantage function, commonly used in policy gradient methods, not only reduces the variance of gradient estimates but also mitigates the effects of policy confounding. By adjusting action values relative to the state representation, the advantage function downweights state-action pairs that are more likely under the current policy, breaking spurious correlations and encouraging the agent to focus on causal factors. We provide both analytical and empirical evidence demonstrating that training with the advantage function leads to improved out-of-trajectory performance.
\end{abstract}

\section{Introduction}

Imagine a robot trained to perform two tasks: first, navigate from your office to the coffee machine, retrieve a cup of coffee, and return; second, go to the printer room, make copies, and return. There are two possible routes to the coffee machine: one through the printer room and another through a direct corridor. Since the corridor route is shorter, the robot typically avoids the printer room when fetching coffee. However, one day, the corridor is blocked and the robot takes the path through the printer room, unexpectedly returning with a copy of a paper titled \emph{Breaking Habits}.

Why did this happen? After repeatedly performing these tasks, the robot incorrectly associated the printer room with the need to make copies, turning this misassociation into a habit. While this habit works fine under normal conditions, it fails when the robot is forced to deviate from its usual path. This failure mode, referred to as \emph{out-of-trajectory generalization}, was explored by \citet{suau2024bad} in the context of reinforcement learning (RL). The authors showed that such issues arise because the agent’s policy introduces spurious correlations \citep{pearl2016causal} between rewards and observations, a phenomenon they termed \emph{policy confounding}.

\paragraph{Contributions}
In this paper, we observe that the advantage function, commonly used in many policy gradient methods, not only reduces the variance of gradient updates but also plays a crucial role in mitigating this issue. We demonstrate, both analytically and experimentally, that using the advantage function encourages the agent to learn representations that rely more on the true causal factors. It achieves this by scaling state-action pairs according to their probability under a given policy and state representation, effectively breaking the spurious correlations introduced by the agent’s policy. This encourages the agents to focus on causal factors, enabling better out-of-trajectory generalization.

To support the theoretical findings, experiments are conducted in three simple environments. The results indicate that agents trained with $Q$-values fail to generalize beyond their usual trajectories, whereas agents trained with the advantage function exhibit robustness to trajectory deviations. Furthermore, an analysis of the learned state representations reveals that the latter focus on causal factors rather than exploiting spurious correlations.

Finally, we examine the impact of implementation choices, such as batch size and advantage normalization. Through experiments, we show that these factors can significantly affect the learned state representations.

\section{Example: Key2Door}\label{sec:example}

We will use the following example throughout the paper to illustrate the ideas presented.

Figure \ref{fig:key2door} depicts a gridworld environment. The agent's objective is to collect a key placed at the beginning of the corridor and then open the door at the end. The agent's observation consists of its current location $L \in \{0, 1, .., 6\}$ and a binary variable $X \in \{0, 1\}$ indicating whether or not it has the key. At all states, the agent has two possible actions: moving left $A=0$, or moving right $A=1$. The agent receives a reward of $+1$ if it reaches the door after collecting the key, and $0$ if it reaches the door without the key. Additionally, there is a $-0.01$ penalty per timestep to incentivize the agent to take the most direct path. 
\begin{wrapfigure}{r}{0.6\textwidth} 
\begin{center} 
\vspace{-15pt} 
    \includegraphics[width=0.6\textwidth]{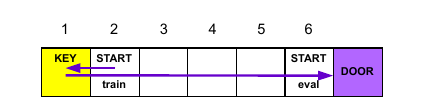} 
\end{center} 
\vspace{-10pt} 
\caption{Key2Door environment.} 
\vspace{-5pt} 
\label{fig:key2door} 
\end{wrapfigure} 

During training, the agent always starts the episode in location 2. This implies that once the agent has collected the key and is moving toward the door, it can disregard the key once it reaches location 3. The agent can learn to use its location to infer whether it has the key. However, this only works well if the agent consistently follows the optimal policy, as depicted by the purple arrow. If the agent moves randomly through the corridor, it cannot reliably determine whether the key has been collected. Moreover, an agent that relies solely on its location will fail when evaluated in the same task but starting from location 6. The agent will attempt to move right to open the door, even though it has not collected the key.

\section{Related work}
The term policy confounding was introduced by \citet{suau2024bad} to describe a phenomenon in which policies, by influencing both past and future observation variables, induce spurious correlations. These correlations can lead the agent to develop shortcuts, referred to as habits, that are effective only within the trajectories the agent typically follows but fail to generalize out-of-trajectory. Out-of-trajectory (OOT) generalization is a specific instance of the broader problem of out-of-distribution (OOD) generalization \citep{kirk2023survey}. Unlike OOD generalization, which focuses on adapting to environments with different rewards \citep{Taylor09ICML}, observations \citep{mandlekar2017adversarially,zhang2020invariant}, or transition dynamics \citep{higgins2017darla}, OOT generalization seeks to enable agents to generalize across alternative trajectories within the same environment. Empirical evidence of policy confounding has been reported in several studies \citep{machado2018revisiting, Song2020Observational, lan2023can, he2024model, weltevrede2024training}. Similarly, prior work has examined confounding in the context of imitation learning \citep{de2019causal, zhang2020causal, tennenholtz2021covariate, ding2023seeing}. However, unlike policy confounding, these works focus on cases where the agent does not actively contribute to the formation of spurious correlations.

The connection between the advantage function and causality has been explored in prior work. \citet{corcoll2022disentangling} use the advantage function to differentiate between agent-driven effects and external environmental factors, constructing a hierarchy of transformations that the agent can perform on its environment. \citet{pan2022direct} argue that the advantage function can be interpreted as a measure of an action’s causal effect on return and introduce a method for learning advantages directly from on-policy data without relying on a value function. This approach was later extended to support off-policy data \citep{pan2024skill}. While not explicitly framing the advantage function in causal terms, \citet{raileanu2021decoupling} describe it as a measure of the expected additional return gained by selecting a particular action over following the current policy. Their experiments suggest that the advantage function is less prone to overfitting to certain environment-specific idiosyncrasies.
 
Our work is strongly influenced by \citet{chung2021beyond}, who showed that the choice of baseline in policy gradient updates can sometimes lead to overly aggressive updates, causing what the authors term committal behavior, which may result in suboptimal policies. \citet{mei2022role} further analyzed the theoretical properties of state-value baselines (i.e., advantages), demonstrating that they moderate update aggressiveness and ensure convergence to the optimal policy. As we discuss in the following sections, these insights are closely connected to the findings presented in this work.
\section{Preliminaries}
% \subsection{Problem Formulation}
We focus on problems where states are represented by a set of observation variables, or factors \citep{boutilier1999decision}. This representation is common in tasks that involve modeling policies and value functions using function approximators \citep{franccois2018introduction}. These observation variables typically describe features of the agent's state in the environment.
\begin{definition}[MDP] A Markov decision process (MDP) is a tuple $\langle \mathcal{S}, \mathcal{A}, T, R \rangle$, where $\mathcal{S}$ represents the set of states, $\mathcal{A}$ denotes the set of actions available to the agent, $T : \mathcal{S} \times \mathcal{A} \to \Delta (\mathcal{S})$ is the transition function, and $R : \mathcal{S} \times \mathcal{A} \to \mathbb{R}$ is the reward function. \end{definition}
\begin{definition}[FMDP] A Factored Markov decision process (FMDP) is an MDP where the set of states is defined by a set of observation variables, or factors, $F = {F^1, \dots, F^{|F|}}$. Each variable $F^i$ can take any value from its domain $\mathcal{F}^i$. Consequently, each state $s$ corresponds to a unique combination of values for the variables in $F$, i.e., $s = \langle f^1, \dots, f^{|F|} \rangle \in \times \mathcal{F} = \mathcal{S}$. \end{definition}

\subsection{State representations}
The agent's objective is to learn a policy $\pi: \mathcal{S} \to \Delta (\mathcal{A})$ that maximizes the expected discounted sum of rewards \citep{sutton2018reinforcement}. 
However, learning a policy that conditions on every observation variable might be impractical, especially in scenarios with a large number of variables. Fortunately, in many problems, not all variables are essential, and compact state representations can be found that are sufficient for solving the task at hand \citep{McCallum95PhD}. This is where function approximators, such as neural networks, come into play. When used to model policies and value functions, they learn to ignore certain observation variables in $F$ if those variables are deemed unnecessary for estimating values and optimal actions.

\begin{definition}[State representation] A state representation is a function $\Phi: \mathcal{S} \to \bar{\mathcal{S}}$, where $\mathcal{S} = \times \mathcal{F}$, $\bar{\mathcal{S}} = \times \bar{\mathcal{F}}$, and $\bar{F} \subseteq F$. \label{def:state_representation} \end{definition}

Intuitively, a state representation $\Phi(s_t)$ is a projection of a state $s \in \mathcal{S} = \times \mathcal{F}$ onto a lower-dimensional space $\bar{S} = \times \bar{\mathcal{F}}$, defined by a subset of its variables, $\bar{F} \subseteq F$. We use $\{s\}^\Phi = \{s' \in \mathcal{S}: \Phi(s') = \Phi(s)\}$ to denote the equivalence class of $s$ under $\Phi$.

In the Key2Door example a potential state representation could be $\Phi(s_t) = \langle l_t \rangle$ for all $s_t \in \mathcal{S}$. This representation retains only the agent's current location, ignoring all other variables in $F$. Therefore, all states sharing the same value for $L_t$ belong to the same equivalence class.

Not all state representations are sufficient to learn the optimal policy. Some representations may exclude variables that carry valuable information for solving the task. In the Key2Door example, knowing whether the key has been collected is crucial to selecting the next action.

\begin{definition}[Markov state representation] A state representation $\Phi(s_t)$ is said to be Markov if, for all $s_t, s_{t+1} \in \mathcal{S}$ and $a_t \in \mathcal{A}$, 

\begin{equation*} R(s_t , a_t) = R(\Phi(s_t), a_t) \quad \text{and} \quad \sum_{s'_{t+1} \in \{s_{t+1}\}^\Phi} T(s'_{t+1} \mid s_t, a_t) = P(\Phi(s_{t+1}) \mid \Phi(s_t), a_t), \end{equation*} 
\normalsize
where $R(\Phi(s_t), a_t)$ denotes the reward $R(s'_t, a_t)$ for any $s'_t \in \{s_t\}^\Phi$. \label{def:markov_state} \end{definition}

% \begin{definition}[$\pi$-Markov state representation] A state representation $\Phi^\pi$ is said to be $\pi$-Markov if, for all $s_t, s_{t+1} \in \mathcal{S}^\pi$, $a_t \in \supp(\pi( \cdot \mid s_t))$,  
% \begin{equation*}
%     R(s_t , a_t) = R^\pi(\Phi^\pi(s_t), a_t)
%     \quad \text{and} \quad \sum_{s'_{t+1} \in \{s_{t+1}\}^\Phi_\pi} T(s'_{t+1} \mid s_t, a_t) = P^\pi(\Phi^\pi(s_{t+1}) \mid \Phi^\pi(s_t), a_t),
%     % R_h(h_t, a_t) = \EX_\pi[r_t \mid \Phi^\pi(h_t), a_t]
% \end{equation*}
% where $S^\pi \subseteq S$ denotes the set of states visited under $\pi$, $R^\pi(\Phi^\pi(s_t), a_t)$ is the reward $R(s'_t, a_t)$ at any $s'_t \in \{s_t\}^\Phi_\pi$, with $\{s\}^\Phi_\pi =\{s' \in S^\pi: \Phi^\pi(s') = \Phi^\pi(s)\} $, and $P^\pi$ is probability under $\pi$.
% \end{definition}

\subsection{Policy confounding}
The phenomenon of policy confounding plays a critical role in the search for simpler state representations. The policy can mislead the function approximator into forming simpler state representations that rely on spurious correlations, such as having the key when being at location 6, rather than true causal factors.

\begin{definition}[Policy Confounding] A state representation $\Phi : \mathcal{S} \to \bar{\mathcal{S}}$ is said to be confounded by a policy $\pi$ if, for some $s_t, s_{t+1} \in \mathcal{S}$ and $a_t \in \mathcal{A}$, 

\begin{equation*} 
R^\pi(\Phi(s_t), a_t) \neq R^\pi(\dooperator(\Phi(s_t)), a_t)
\end{equation*}
or
\begin{equation*}
P^\pi(\Phi(s_{t+1}) \mid \Phi(s_t), a_t) \neq P^\pi(\Phi(s_{t+1}) \mid \dooperator(\Phi(s_t)), a_t). 
\end{equation*}
\normalsize
where $R^\pi(\Phi^\pi(s_t), a_t)$ is the reward $R(s'_t, a_t)$ at any $s'_t \in \{s_t\}^\Phi_\pi$, with $\{s\}^\Phi_\pi =\{s' \in S^\pi: \Phi^\pi(s') = \Phi^\pi(s)\} $, and $P^\pi$ is probability under $\pi$.
\end{definition}

The operator $\dooperator(\cdot)$ is known as the do-operator, and it is used to represent physical interventions in a system \citep{pearl2016causal}. These interventions are meant to distinguish cause-effect relations from mere statistical associations. In our case, $\dooperator(\Phi(s_t))$ means setting the variables forming the state representation $\Phi(s_t)$ to a particular value and considering all possible states in the equivalence class, $s'_t \in \{s_t\}^\Phi$, (i.e., all states that share the same value for the observation variables that are selected by $\Phi$; independently of whether these are visited by the policy being followed). For instance, in the Key2Door example, $R^{\pi^*}(L = 6) = +1$ when following the optimal policy $\pi^*$ (purple path in Figure \ref{fig:key2door}) since we know that the agent has the key, while $R^{\pi^*}(\dooperator(L = 6)) = \pm 1$ since the agent may or may not have the key.

Strictly speaking, the only representation that satisfies the above conditions is the Markov state representation. This representation is invariant to the agent's policy, making it robust to any deviations in the trajectory. In the following, we refer to Markov state representations as \textit{Causal State Representations}, as they necessarily include all the causal factors governing rewards and transitions.

\subsection{Policy gradient and advantage function}\label{sec:policy_gradient}

\citet{suau2024bad} demonstrated that the phenomenon of policy confounding is particularly problematic when training agents with on-policy methods. This is because, when updating the policy, on-policy methods rely solely on trajectories collected using the current policy. This contrasts with off-policy methods, where the agent is trained on trajectories generated by multiple policies, thus broadening the trajectory distribution and reducing the risk of the agent picking up on spurious correlations present in specific trajectories.

A popular family of on-policy methods includes those that directly optimize the policy by following the gradient of the expected return with respect to the policy parameters, \(\theta\). The policy gradient theorem \citep{marbach1999simulation, sutton1999policy} formalizes this as:

\begin{equation}
\nabla_\theta J(\pi_\theta) = \mathbb{E}_{s_t, a_t \sim \pi_\theta} \left[ \frac{\nabla_\theta \pi_\theta(a_t \mid s_t)}{\pi_\theta(a_t \mid s_t)} Q^\pi(s_t, a_t) \right],
\end{equation}
\normalsize
where \(Q^\pi(s_t, a_t)\) is the action-value function under policy \(\pi\).

Following this gradient increases the likelihood of sampling actions that lead to high returns while reducing the probability of actions leading to lower returns. However,
in practice, computing the exact gradient is infeasible. Instead, we approximate the gradient using sample estimates from trajectories collected by the policy, which introduces high variance.

A common strategy to reduce variance is to subtract the state value function \(V^\pi(s_t)\) from the \(Q\)-value \citep{Baird94ICoNN, greensmith2001variance}, leading to the definition of the advantage function:

\begin{equation}
A^\pi(s_t, a_t) := Q^\pi(s_t, a_t) - V^\pi(s_t).
\end{equation}
\normalsize
This adjustment does not introduce bias, as the value function is independent of the action, but it significantly reduces the variance of the gradient estimation.

\section{The role of the advantage in learning causal state representations}\label{sec:advantage_role}

In this section, we demonstrate that, beyond its well-known role in variance reduction, the advantage function plays a crucial role in mitigating policy confounding. Specifically, it helps break any spurious correlations that the policy may introduce, thus preventing the agent from forming habits that fail to generalize beyond its typical trajectories. Proofs for the theoretical
results in this section can be found in Appendix \ref{ap:proofs}.

\subsection{Value and advantage function under a state representation}\label{sec:advantage_phi}
Computing the advantage usually involves fitting a model to approximate the value function. Conceptually, the value function can be decomposed into a state representation function $\Phi(s)$, which projects the states into a lower-dimensional space defined by the subset of variables relevant to the task, and a function $V^\pi(\Phi(s))$ that maps the state representation to a scalar value. 
% \begin{definition}[$V^\pi$ and $Q^\pi$ under $\Phi$]
% Given a state representation $\Phi : \mathcal{S} \to \mathcal{\bar{S}}$ (Definition~\ref{def:state_representation}), the state value $V^\pi: \bar{\mathcal{S}} \to \mathbb{R}$ and state-action value $Q^\pi: \bar{\mathcal{S}} \times \mathcal{A} \to \mathbb{R}$ under $\Phi$ are defined as:
% \begin{equation}
%     
%         V^\pi(\Phi(s_t)) = \sum_{s_t' \in \{s_t\}^\Phi} P^\pi(s_t' \mid \Phi(s_t))V^\pi(s_t') \enspace \text{and} \enspace Q^\pi(\Phi(s_t), a_t) = \sum_{s_t' \in \{s_t\}^\Phi} P^\pi(s_t' \mid \Phi(s_t))Q^\pi(s_t', a_t).
%     \end{equation}
% with
% \[
% P(s'_t \mid \Phi(s_t)) := \frac{d^\pi(s_t)}{\sum_{s_t' \in \{s_t\}^\Phi} d^\pi(s_t')}.
% \]
% \end{definition}

\begin{definition}[$V^\pi$ and $Q^\pi$ under $\Phi$]
Let $\Phi : \mathcal{S} \to \bar{\mathcal{S}}$ be a state representation (Definition~\ref{def:state_representation}) that induces an equivalence class $\{s\}^\Phi := \{s' \in \mathcal{S} : \Phi(s') = \Phi(s)\}$. Assume the Markov chain under policy $\pi$ admits a stationary distribution $d^\pi$ over $\mathcal{S}$. Then the state value $V^\pi: \bar{\mathcal{S}} \to \mathbb{R}$ and state-action value $Q^\pi: \bar{\mathcal{S}} \times \mathcal{A} \to \mathbb{R}$ under $\Phi$ are defined as:

\[
    V^\pi(\Phi(s_t)) := \sum_{s_t' \in \{s_t\}^\Phi} P^\pi(s_t' \mid \Phi(s_t)) \, V^\pi(s_t') 
\]
and
\[
    Q^\pi(\Phi(s_t), a_t) := \sum_{s_t' \in \{s_t\}^\Phi} P^\pi(s_t' \mid \Phi(s_t)) \, Q^\pi(s_t', a_t),
\]
\normalsize
where the distribution $P^\pi(s_t' \mid \Phi(s_t))$ is defined as:

\[
    P^\pi(s_t' \mid \Phi(s_t)) := \frac{d^\pi(s_t')}{\sum_{s \in \{s_t\}^\Phi} d^\pi(s)} \quad \text{for all } s_t' \in \{s_t\}^\Phi.
\]
\end{definition}

\begin{lemma}
Let $\Phi$ be a Causal (Markov) State Representation (Definition \ref{def:markov_state}), then
 
\begin{equation}
    Q^\pi(s_t, a_t) = Q^\pi(s'_t, a_t) = Q^\pi(\Phi(s_t), a_t)  \qquad \text{and} \qquad V^\pi(s_t) = V^\pi(s'_t) = V^\pi(\Phi(s_t))
\end{equation}   
\normalsize
for all $a \in \mathcal{A}$, $s \in \mathcal{S}$, and, $s' \in \{s\}^\Phi$.
\label{lm:v_q_phi}
\end{lemma}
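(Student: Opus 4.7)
The plan is to first establish that $Q^\pi(s,a)$ and $V^\pi(s)$ are constant on each equivalence class $\{s\}^\Phi$, and then observe that the averaged definitions in Section~\ref{sec:advantage_phi} collapse to that common constant. Throughout I would use the implicit assumption from the paper's setup that the policy is $\Phi$-measurable, i.e.\ $\pi(a \mid s)$ depends on $s$ only through $\Phi(s)$, since otherwise values within a class can genuinely differ.

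The core step is to show that for any $s, s' \in \{s_t\}^\Phi$ and any action $a$, one has $Q^\pi(s,a) = Q^\pi(s',a)$. I would do this by a Bellman fixed-point argument on the quotient space. Define an abstract Bellman operator acting on functions $\bar Q : \bar{\mathcal{S}} \times \mathcal{A} \to \mathbb{R}$ using exactly the ingredients that Definition~\ref{def:markov_state} guarantees to be well-defined on $\bar{\mathcal{S}}$: the abstract reward $R(\Phi(s), a)$ and the abstract transition $P(\Phi(s_{t+1}) \mid \Phi(s_t), a_t)$. This operator is a $\gamma$-contraction on $\bar{\mathcal{S}} \times \mathcal{A}$ under the sup-norm, so it admits a unique fixed point $\bar Q^\pi$. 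I would then lift it back to $\mathcal{S}$ by $\tilde Q(s,a) := \bar Q^\pi(\Phi(s), a)$ and verify that $\tilde Q$ satisfies the original Bellman equation on $\mathcal{S}$: expand the equation, group the sum over next states by equivalence class, pull $\tilde Q$ out (it is class-constant by construction), and invoke the Markov property of $\Phi$ to convert the inner sum of transition probabilities into $P(\Phi(s_{t+1}) \mid \Phi(s_t), a_t)$. Uniqueness of the fixed point on $\mathcal{S}$ then gives $Q^\pi(s,a) = \tilde Q(s,a) = \bar Q^\pi(\Phi(s),a)$, which is manifestly constant on each class. The statement for $V^\pi$ follows either by the same argument or directly from $V^\pi(s) = \sum_a \pi(a \mid \Phi(s)) \, Q^\pi(s,a)$.

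The second identity is then immediate: since $Q^\pi(s'_t, a_t)$ takes the same value for every $s'_t \in \{s_t\}^\Phi$, and $P^\pi(\cdot \mid \Phi(s_t))$ is a probability distribution supported on $\{s_t\}^\Phi$, the weighted sum defining $Q^\pi(\Phi(s_t), a_t)$ collapses to that common value; similarly for $V^\pi$.

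The main obstacle I anticipate is handling the implicit circularity of the Bellman equation cleanly, which is why I would lean on the contraction-mapping machinery rather than trying to induct directly on the recursion. A secondary subtlety worth flagging in the write-up is the $\Phi$-measurability of $\pi$: without it, two states in the same equivalence class can induce different action distributions and the class-constancy of $V^\pi$ fails, so either the hypothesis should be made explicit or $\pi$ should be interpreted as the policy used by the function approximator operating on $\Phi$.
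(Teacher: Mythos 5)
Your argument is correct, but note that the paper itself offers no proof of this lemma at all: Appendix~A only proves Theorem~\ref{thm:advantage_phi}, and Lemma~\ref{lm:v_q_phi} is implicitly treated as the standard ``model-irrelevance/bisimulation'' fact that a Markov state representation preserves values, the informal reasoning being that the reward and the class-level transition dynamics (hence the whole distribution of future returns given $\Phi(s_t)$ and $a_t$) are identical for every member of an equivalence class, so $Q^\pi$ and $V^\pi$ cannot distinguish them, and the $d^\pi$-weighted averages in the definition of $Q^\pi(\Phi(s_t),a_t)$ and $V^\pi(\Phi(s_t))$ then collapse to the common value. Your contraction-mapping route makes this rigorous in the cleanest standard way: the quotient Bellman (policy-evaluation) operator built from $R(\Phi(s),a)$ and $P(\Phi(s_{t+1})\mid\Phi(s_t),a_t)$ is well defined precisely because of Definition~\ref{def:markov_state}, its lift is a fixed point of the original operator, and uniqueness forces $Q^\pi(s,a)=\bar Q^\pi(\Phi(s),a)$; an equivalent elementary alternative is induction on the horizon of $k$-step returns. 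Two caveats you raise are worth keeping explicit: (i) the $\Phi$-measurability of $\pi$ is genuinely needed (without it $V^\pi$, and hence $Q^\pi$ via the one-step lookahead, need not be class-constant), and it is consistent with the paper, whose corollary writes $\pi(a_t\mid\Phi(s_t))$; and (ii) the contraction step needs $\gamma<1$ or an episodic/proper-termination assumption, which the paper leaves implicit, and the conditional $P^\pi(\cdot\mid\Phi(s_t))$ must be well defined, i.e.\ the class has positive mass under $d^\pi$ --- though the collapse of the weighted average onto the common value holds for any weights summing to one.
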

\begin{definition}[$A_\Phi^\pi$ under $\Phi$]\label{def:advantage_phi}
    Given a state representation $\Phi : \mathcal{S} \to \mathcal{\bar{S}}$, the advantage function $A_\Phi^\pi: \mathcal{S} \to \mathbb{R}$ under $\Phi$ is defined as:
    
    \begin{equation}
        A^\pi_\Phi(s_t, a_t) := Q^\pi(s_t, a_t) - V^\pi(\Phi(s_t)).
    \end{equation}
    \normalsize
\end{definition}
Despite the agent's state representation being $\Phi(s_t)$, Definition \ref{def:advantage_phi} shows that the advantage function $A_\Phi^\pi$ still depends on the full state $s_t$. This is because $Q^\pi(s_t, a_t)$ is typically estimated directly for each state $s_t$ and action $a_t$ from Monte Carlo rollouts, making it independent of $\Phi(s_t)$. In contrast,  $V^\pi(\Phi(s_t))$ averages over the value of all states equivalent under $\Phi$, effectively marginalizing out the variables not preserved by $\Phi$. As a result, $A_\Phi^\pi$ measures the deviation of a particular state-action pair from the expected value across the equivalence class $\{s_t\}^\Phi$.

\subsection{The scaling effect of the advantage function}
\begin{restatable}{theorem}{theoremone}
Let $\Phi$ be an arbitrary state representation. The advantage function can be expressed as:

    \begin{equation}
    A^\pi_\Phi(s_t, a_t) = (1 - P^\pi(s_t, a_t \mid \Phi(s_t)))( Q^\pi(s_t, a_t) - \tilde{Q}^\pi(\neg \langle s_t, a_t\rangle))
    \end{equation}
\normalsize
where 

    \begin{equation}
        \tilde{Q}_\Phi^\pi(\neg \langle s_t, a_t\rangle) = \frac{\sum_{ s'_t, a'_t  \neq  s_t, a_t } P^\pi(s'_t, a'_t \mid \Phi(s_t)) Q^\pi(s'_t, a'_t)}{\sum_{ s'_t, a'_t  \neq  s_t, a_t } P^\pi(s'_t, a'_t \mid \Phi(s_t))}
    \end{equation}
\label{thm:advantage_phi}
\end{restatable}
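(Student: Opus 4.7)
The plan is to start from the definition $A^\pi_\Phi(s_t, a_t) = Q^\pi(s_t, a_t) - V^\pi(\Phi(s_t))$ and rewrite $V^\pi(\Phi(s_t))$ as a weighted average of $Q$-values over state-action pairs in the equivalence class $\{s_t\}^\Phi \times \mathcal{A}$, then isolate the term indexed by $(s_t, a_t)$ so that the factor $1 - P^\pi(s_t, a_t \mid \Phi(s_t))$ emerges naturally.

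First I would expand $V^\pi(\Phi(s_t)) = \sum_{s'_t \in \{s_t\}^\Phi} P^\pi(s'_t \mid \Phi(s_t))\, V^\pi(s'_t)$ using the definition given in Section~\ref{sec:advantage_phi}, substitute $V^\pi(s'_t) = \sum_{a'_t} \pi(a'_t \mid s'_t)\, Q^\pi(s'_t, a'_t)$, and introduce the joint conditional $P^\pi(s'_t, a'_t \mid \Phi(s_t)) := P^\pi(s'_t \mid \Phi(s_t))\, \pi(a'_t \mid s'_t)$, which is a valid probability distribution because $\sum_{s'_t, a'_t} P^\pi(s'_t, a'_t \mid \Phi(s_t)) = 1$. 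This yields the compact form
\[
V^\pi(\Phi(s_t)) = \sum_{s'_t, a'_t} P^\pi(s'_t, a'_t \mid \Phi(s_t))\, Q^\pi(s'_t, a'_t).
\]

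Next I would split this sum into the singleton contribution at $(s_t, a_t)$ and the complementary sum over all $(s'_t, a'_t) \neq (s_t, a_t)$. Letting $p := P^\pi(s_t, a_t \mid \Phi(s_t))$, the complementary probabilities total $1 - p$, and by the definition given in the theorem statement the normalized complementary average equals $\tilde{Q}^\pi_\Phi(\neg \langle s_t, a_t\rangle)$. Hence $V^\pi(\Phi(s_t)) = p\, Q^\pi(s_t, a_t) + (1 - p)\, \tilde{Q}^\pi_\Phi(\neg \langle s_t, a_t\rangle)$. Substituting back into $A^\pi_\Phi(s_t, a_t)$ and collecting the $Q^\pi(s_t, a_t)$ terms gives the claimed factorization.

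I do not anticipate a real obstacle, since the entire argument is algebraic manipulation of expectations. The one subtlety worth writing out carefully is the construction of the joint conditional $P^\pi(s'_t, a'_t \mid \Phi(s_t))$, as the excerpt only specifies the marginal $P^\pi(s'_t \mid \Phi(s_t))$ via the stationary distribution $d^\pi$; once one verifies that combining this marginal with the policy yields a proper probability measure over $\{s_t\}^\Phi \times \mathcal{A}$, the rest is a direct split-and-collect calculation.
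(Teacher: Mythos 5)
Your proposal is correct and follows essentially the same route as the paper's proof: expand $V^\pi(\Phi(s_t))$ into a joint conditional distribution over state-action pairs in the equivalence class, split off the $(s_t,a_t)$ term, and identify the complementary mass $1 - P^\pi(s_t,a_t \mid \Phi(s_t))$ as the normalizer of $\tilde{Q}^\pi_\Phi(\neg\langle s_t,a_t\rangle)$. The only cosmetic difference is that you first write $V^\pi(\Phi(s_t)) = p\,Q^\pi(s_t,a_t) + (1-p)\,\tilde{Q}^\pi_\Phi(\neg\langle s_t,a_t\rangle)$ and then substitute, while the paper carries the split through the expression for $A^\pi_\Phi$ directly.
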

This result highlights that the advantage function is proportional to the complement of the joint probability of a specific state-action pair conditioned on the state representation, i.e., $1 - P^\pi(s_t, a_t \mid \Phi(s_t))$. This ensures that the advantage of less probable state-action pairs is larger in magnitude, and helps offset the overrepresentation of more frequent pairs under $\pi$. 

This scaling effect is crucial for mitigating policy confounding as it breaks any spurious correlations induced by the policy. By amplifying rare but informative samples, the advantage function encourages the agent to attend to underlying causal factors rather than simply reinforcing habits. Intuitively, it reweights each sample according to its causal content, preventing high-probability pairs from dominating the gradient updates during training.

Moreover, the difference \( Q^\pi(s_t, a_t) - \tilde{Q}^\pi_\Phi(\neg \langle s_t, a_t \rangle) \) reflects how much better (or worse) a given pair is relative to the alternative pairs. Notably, \( \tilde{Q}^\pi_\Phi(\neg \langle s_t, a_t \rangle) \) is a weighted average over state-action pairs in the same equivalence class, excluding $\langle s_t, a_t \rangle$, and since it does not depend on $P^\pi(s_t, a_t \mid \Phi(s_t))$, it provides a stable, representation-conditional baseline for comparison. 

\begin{corollary}
Let $\Phi$ be a Causal (Markov) State Representation. The advantage function can be expressed as:

    \begin{equation}
    A^\pi_\Phi(s_t, a_t) = (1 - \pi(a_t \mid \Phi(s_t)))( Q^\pi(s_t, a_t) - \tilde{Q}^\pi(s_t, \neg a_t))
    \end{equation}
\normalsize
where 

    \begin{equation}
        \tilde{Q}^\pi(s_t, \neg a_t) = \frac{\sum_{a'_t  \neq  a_t} \pi(a'_t \mid \Phi(s_t)) Q^\pi(s_t, a'_t)}{\sum_{a'_t  \neq  a_t} \pi(a'_t \mid \Phi(s_t))}
    \end{equation}
\end{corollary}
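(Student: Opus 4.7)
The plan is to derive the Corollary as a direct specialization of Theorem~\ref{thm:advantage_phi} under the two simplifications that the Markov property of $\Phi$ (Lemma~\ref{lm:v_q_phi}) provides, together with the fact that the policy conditions on the state representation, i.e.\ $\pi(a \mid s) = \pi(a \mid \Phi(s))$. The first fact lets me replace $Q^\pi(s'_t, a'_t)$ by $Q^\pi(s_t, a'_t)$ whenever $s'_t \in \{s_t\}^\Phi$, pulling the $Q$-value out of any sum over $s'_t$; the second lets me factor $P^\pi(s'_t, a'_t \mid \Phi(s_t)) = \pi(a'_t \mid \Phi(s_t))\, P^\pi(s'_t \mid \Phi(s_t))$.

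With these two facts in place, I would partition the set $\{(s'_t, a'_t) \neq (s_t, a_t)\}$ into two disjoint pieces: the part with $a'_t = a_t$ and $s'_t \in \{s_t\}^\Phi \setminus \{s_t\}$, and the part with $a'_t \neq a_t$ and $s'_t$ ranging over all of $\{s_t\}^\Phi$. On the first piece, the inner sum $\sum_{s'_t \neq s_t} P^\pi(s'_t \mid \Phi(s_t))$ collapses to $1 - P^\pi(s_t \mid \Phi(s_t))$; on the second, it collapses to $1$. The leading coefficient $1 - P^\pi(s_t, a_t \mid \Phi(s_t))$ in Theorem~\ref{thm:advantage_phi} likewise factors as $1 - \pi(a_t \mid \Phi(s_t))\, P^\pi(s_t \mid \Phi(s_t))$.

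Once these substitutions are carried through and the product from Theorem~\ref{thm:advantage_phi} is expanded, the terms carrying a factor of $P^\pi(s_t \mid \Phi(s_t))$ should cancel exactly, leaving $(1 - \pi(a_t \mid \Phi(s_t)))\, Q^\pi(s_t, a_t) - \sum_{a'_t \neq a_t} \pi(a'_t \mid \Phi(s_t))\, Q^\pi(s_t, a'_t)$. Using $\sum_{a'_t \neq a_t}\pi(a'_t \mid \Phi(s_t)) = 1 - \pi(a_t \mid \Phi(s_t))$, I then pull this common factor out of the second term and recognize the remaining normalized weighted average as $\tilde Q^\pi(s_t, \neg a_t)$, which yields the claimed closed form.

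The main obstacle, in my view, is not any single nontrivial step but rather the bookkeeping of the case split and the cancellation of the $P^\pi(s_t \mid \Phi(s_t))$-dependent pieces across the two branches; a sign or index slip there would corrupt the leading coefficient and obscure the collapse from joint probabilities over $(s'_t, a'_t)$ to a weighting by $\pi$ alone. Everything else is essentially rearrangement once Lemma~\ref{lm:v_q_phi} has done the heavy lifting of reducing the problem from the full state space to the action simplex.
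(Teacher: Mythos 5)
Your proposal is correct and takes essentially the same route as the paper, which offers no separate proof but simply states that the corollary ``follows directly from Theorem~\ref{thm:advantage_phi} and Lemma~\ref{lm:v_q_phi}''; your specialization of the theorem via the lemma, with the policy conditioning on $\Phi(s_t)$ so that $P^\pi(s'_t, a'_t \mid \Phi(s_t))$ factors, is exactly that argument with the bookkeeping made explicit, and the claimed cancellation of the $P^\pi(s_t \mid \Phi(s_t))$ terms does go through, leaving $(1-\pi(a_t\mid\Phi(s_t)))Q^\pi(s_t,a_t) - \sum_{a'_t\neq a_t}\pi(a'_t\mid\Phi(s_t))Q^\pi(s_t,a'_t)$ as you state.
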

The above result follows directly from Theorem \ref{thm:advantage_phi} and Lemma \ref{lm:v_q_phi}. Note that when the state representation $\Phi$ is causal, there is no need to scale the sample gradients according to the likelihood of the states given $\Phi$, as $\Phi$ already accounts for the true factors governing rewards and transitions.

% However we still need to adjust for the likelihood of the actions... WHY?

% In the extreme case of a deterministic policy, the advantage becomes zero, which halts learning entirely. 
\subsection{Impact on policy gradients}

None of the above would matter if the updates strictly followed the exact policy gradient, as the advantage function does not alter the gradient \citep{sutton2018reinforcement}. However, as discussed in Section \ref{sec:policy_gradient}, policy gradients are typically based on sample estimates (i.e., stochastic gradients).

As a result, depending on the agent's policy and the batch size, it is often the case that for a given $\Phi(s_t)$, all samples in a training batch correspond to a single state $s_t$, rather than including all states $s'_t \in \{s\}^\Phi$. Consequently, when using $Q$-values instead of advantages $A^\pi_\Phi$, this can lead to overly aggressive gradient updates \citep{chung2021beyond}, causing the agent to develop habits based on spurious correlations. This issue is further exacerbated as the policy reinforces the likelihood of frequently observed state-action pairs, creating a vicious circle \citep{mei2022role}. In the extreme, when the policy becomes deterministic, certain state-action pairs may have their probabilities driven to zero, requiring an infeasibly large number of samples to correct for the overestimation of the state-action pairs that are actually visited.

In contrast, the advantage function $A^\pi_\Phi$ scales the gradients by adjusting their magnitude based on the probability of each state-action pair (Theorem \ref{thm:advantage_phi}). This moderates the aggressiveness of gradients for frequently sampled state-action pairs while boosting gradients for less common ones, thereby helping the agent break bad habits.

\subsection{Practical considerations}\label{sec:practical_considerations}
Given the points above, an alternative to using advantages could be to employ large batch sizes. However, the batch size may need to be significantly increased depending on the environment for this approach to be effective. Moreover, we find no compelling reason to avoid using advantages.

It is also important to note that the results reported by \citet{suau2024bad} indicate that PPO struggles with out-of-trajectory generalization, despite the default setting utilizing advantages. We hypothesize that this issue arises because, in most PPO implementations, the advantages in each training batch are normalized before the network update. This normalization removes the scaling effect of the advantage function on the gradients. Both of these aspects are further analyzed in the experiments section.

\subsection{Example: Key2Door}\label{sec:advatange_example}

To illustrate the insights discussed in the previous sections, we revisit the Key2Door example (Section \ref{sec:example}). Suppose the agent's state representation consists solely of the location variable, $\Phi(s) = l$. Table \ref{table:tab1} shows the $Q$-values and corresponding advantages $A^\pi_\Phi$ when the agent is at location 6, both with and without the key, under five different policies. These policies differ in the probability of selecting the optimal action: $0.5$, $0.6$, $0.7$, $0.8$, and $0.9$. The last two columns indicate the probability that the agent has, $P^\pi(X=1 \mid L=6)$, or has not, $P^\pi(X=0\mid L=6)$, the key at location 6.\footnote{$Q$-values and advantages are computed using value iteration. $P^\pi(X \mid L=6)$ is estimated by running the policy over multiple episodes and calculating the frequency of each state.
}

\begin{table}[!ht]
\centering
\caption{$Q$-values, advantages, and probabilities of key and no key when the agent is at location 6 under five different policies. The agent's state representation consists solely of the location variable.}
\resizebox{\textwidth}{!}{%
\begin{tabular}{lcccccccccc}
\toprule
& \multicolumn{4}{c}{\textbf{$Q$-value}} 
& \multicolumn{4}{c}{\textbf{Advantage $\Phi(s) = l$}} 
& \multicolumn{2}{c}{$P^\pi(X \mid L=6)$} \\ 
\cmidrule(r){2-5} \cmidrule(r){6-9} \cmidrule(r){10-11}
& \multicolumn{2}{c}{\textbf{No Key}} & \multicolumn{2}{c}{\textbf{Key}} 
& \multicolumn{2}{c}{\textbf{No Key}} & \multicolumn{2}{c}{\textbf{Key}} 
& \textbf{No Key} & \textbf{Key} \\ 
\cmidrule(r){2-3} \cmidrule(r){4-5} \cmidrule(r){6-7} \cmidrule(r){8-9} 
& \textbf{Left} & \textbf{Right} & \textbf{Left} & \textbf{Right}
& \textbf{Left} & \textbf{Right} & \textbf{Left} & \textbf{Right} 
&  &  \\ 
\midrule
$\pi(a^* \mid s) = 0.5$  & 0.038 & 0     & 0.662 & 1  & -0.657 & -0.695 & -0.033 & 0.305 & 0.168 & 0.832 \\
$\pi(a^* \mid s) = 0.6$  & 0.274 & 0     & 0.839 & 1  & -0.608 & -0.882 & -0.043 & 0.118 & 0.069 & 0.931 \\
$\pi(a^* \mid s) = 0.7$  & 0.504 & 0     & 0.905 & 1  & -0.456 & -0.960 & -0.055 & 0.040 & 0.018 & 0.982 \\
$\pi(a^* \mid s) = 0.8$  & 0.664 & 0     & 0.934 & 1  & -0.321 & -0.985 & -0.051 & 0.015 & 0.003 & 0.997 \\
$\pi(a^* \mid s) = 0.9$  & 0.759 & 0     & 0.950 & 1  & -0.235 & -0.994 & -0.044 & 0.006 & 0.000 & 0.999 \\
\bottomrule
\end{tabular}
}
\vspace{-5pt}
\label{table:tab1}
\end{table}

As revealed by Theorem \ref{thm:advantage_phi}, the magnitudes of the advantages depend on the joint probability of visiting a specific state-action pair. In particular, the advantage decreases (increases) in magnitude as the probability $P^\pi(X, A \mid L =6)$ increases (decreases). For example, the advantage of moving right \emph{with} the key is much larger when the probability of having the key is $0.832$ compared to when it is $0.999$ ($0.305$ vs. $0.006$). This difference is amplified by the increasing probability of taking the right action ($0.5$ vs. $0.9$). In contrast, the corresponding $Q$-values remain constant at $1$.

Similarly, the advantage of moving right \emph{without} the key is smaller (in magnitude) when the probability of not having the key is $0.168$ than when the probability is $0.000$ ($-0.695$ vs. $-0.994$), even though the $Q$-value remains $0$.\footnote{Note that the magnitude of the advantage of moving left \emph{with} the key first increases and then decreases across rows. This occurs because, although the probability of having the key increases, the probability of moving left decreases.}

The above is key because, as the policy improves, the probability of visiting state-action pairs such $\langle L=6, X=1, A=1 \rangle$ increases, and hence the training batches start to fill with many such pairs. Meanwhile, less frequent but informative pairs, such as $\langle L = 6, X = 0, A = 1 \rangle$, become underrepresented. As a result, training on raw $Q$-values can lead the agent to disregard the key variable $X$, effectively exploiting a spurious correlation between location 6 and having the key. This occurs because updates to the policy treat all state-action pairs equally, regardless of how often they appear in the training data. Notably, even under a fully random policy (top row of Table~\ref{table:tab1}), the probability of having the key at location 6 is already $0.832$,  meaning the agent often receives a reward of $1$ simply by moving right at location 6, without explicitly reasoning about whether it has the key.

In contrast, agents trained with the advantage function are less likely to ignore the key variable. When the policy is random, the advantage of moving right \emph{without} the key has a large negative value ($-0.695$), whereas the advantage of moving right \emph{with} the key is just $0.305$, despite the corresponding Q-values being $0$ and $1$ respectively. These differences in magnitude help counteract the overrepresentation of states such as $\langle L=6, X=1, A=1 \rangle$, and ensure that the agent does not develop the habit of moving solely based on location.

\section{Experiments}
The experiments aim to verify whether the insights discussed in the previous section hold in practice. Specifically, we seek to demonstrate that training on the advantage function, rather than raw $Q$-values, helps agents develop state representations that better capture causal factors and thus generalize out-of-trajectory. To test this, we conduct experiments on three gridworld environments: the Key2Door environment described in Section \ref{sec:example}, as well as the Frozen T-Maze and Diversion environments introduced by \citet{suau2024bad}. 

We evaluate the agent's performance in both the training environments and in modified versions, referred to as the evaluation environments, where, like in the Key2Door environment, the agent is forced to deviate from its usual trajectory. Furthermore, we analyze the effects of advantage normalization and batch size, which, as discussed in section \ref{sec:practical_considerations}, can influence out-of-trajectory generalization. Finally, we inspect the state representations learned by the agents by measuring the KL divergence of the policies between various state observations. Details about the T-Maze and Diversion environments are provided in Appendix \ref{ap:environments}.

\subsection{Experimental setup}\label{sec:experimental_setup}
Agents are trained using two different on-policy policy-gradient methods, REINFORCE \citep{Williams92ML} and PPO \citep{schulman2017proximal} to maximize either the advantage or the $Q$-value. We implement policies and value functions as feedforward neural networks and use a stack of past observations as input in environments that require memory. The results are averaged over 10 random seeds. We report the average return as a function of the number of training steps. The shaded areas show the standard error of the mean. Training is interleaved with periodic evaluations in the original environments and their variants.  Further details about the experimental setup are provided in Appendix \ref{ap:experimental_setup}.

\subsection{Results}
\begin{figure}[t]
% \vspace{-15pt}
    \centering
    % First plot
    \begin{subfigure}[t]{0.32\textwidth}
        \centering
        \includegraphics[width=\textwidth]{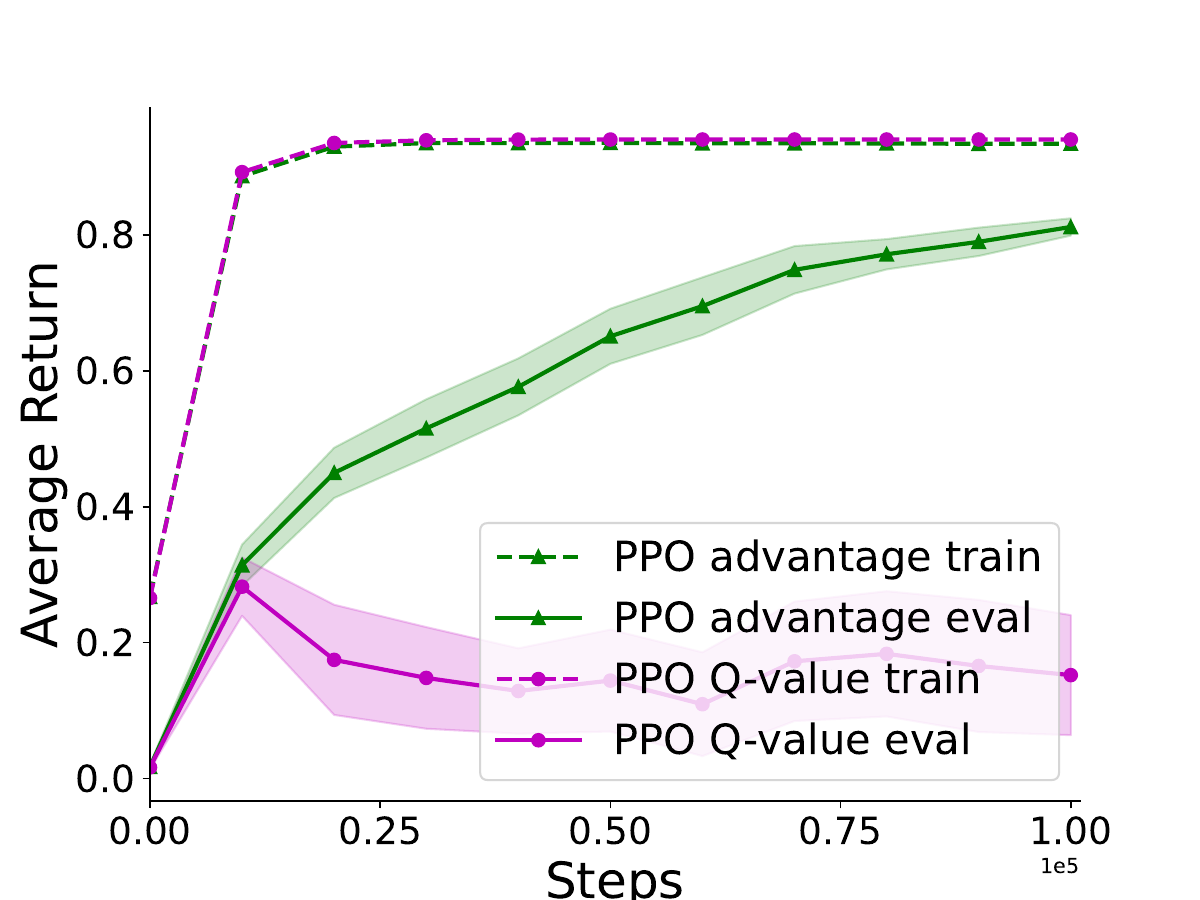}
    \end{subfigure}
    \hfill
    % Second plot
    \begin{subfigure}[t]{0.32\textwidth}
        \centering
        \includegraphics[width=\textwidth]{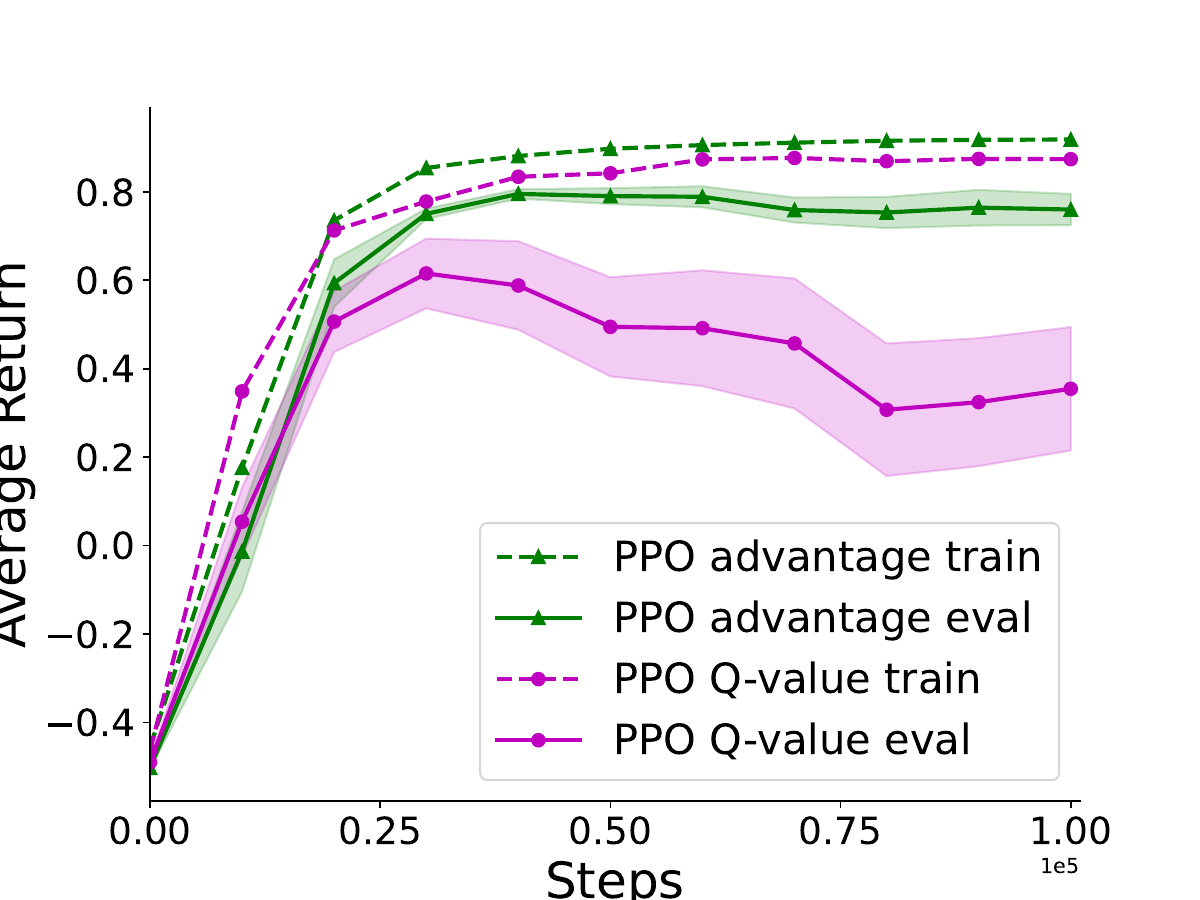}
    \end{subfigure}
    \hfill
    % Third plot
    \begin{subfigure}[t]{0.32\textwidth}
        \centering
        \includegraphics[width=\textwidth]{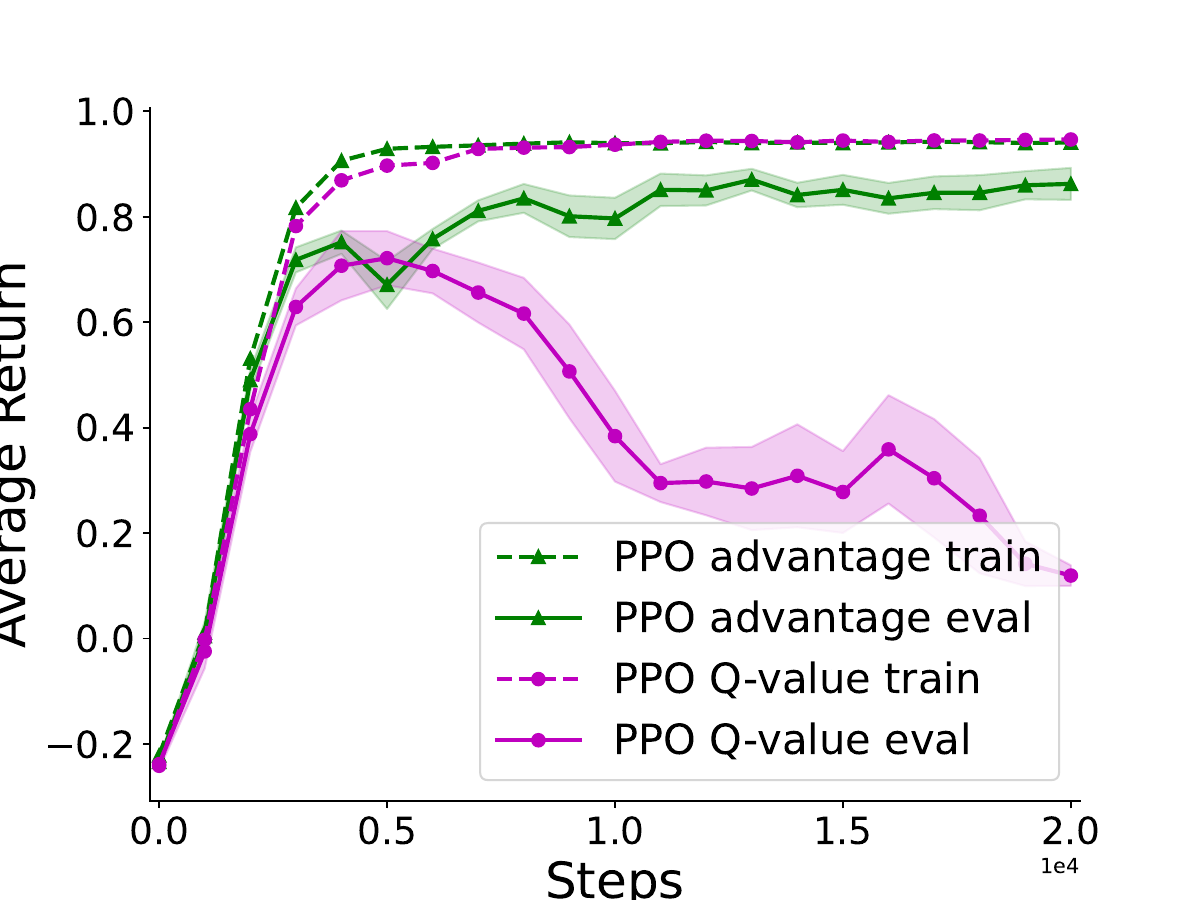}
    \end{subfigure}
    \hfill
    % \vspace{-10pt}
    \caption{Performance of PPO using Q-values and Advantages in both the training and evaluation variants of the Key2Door (first plot), Frozen T-Maze (second plot), and Diversion (third plot).
}
\vspace{-10pt}
    \label{fig:qvsa}
\end{figure}

\begin{wrapfigure}{r}{0.65\textwidth} 
\vspace{-25pt}
\begin{center} 
    \begin{subfigure}[t]{0.325\textwidth}
        \centering
        \includegraphics[width=\textwidth]{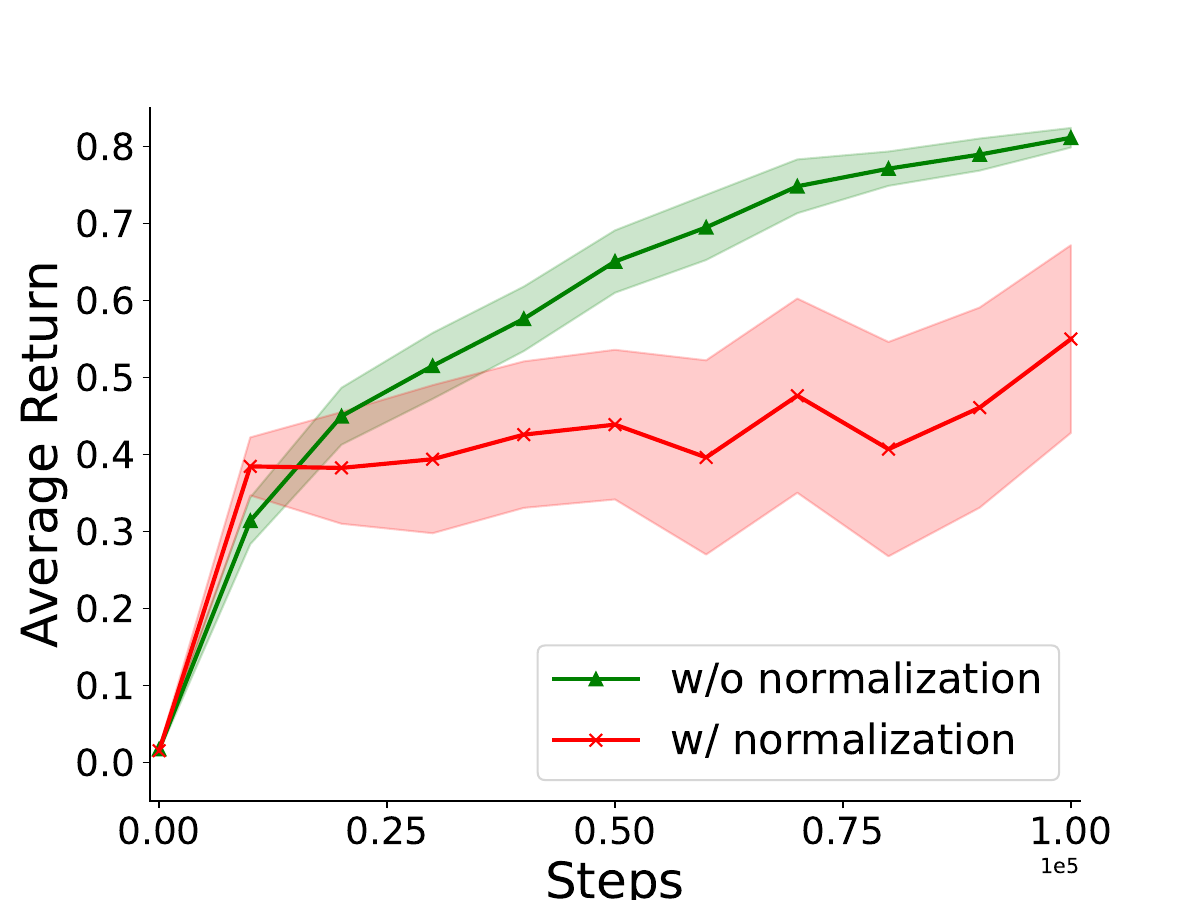}
        % \vspace{-10pt}
    \end{subfigure}
    % Second plot
    \hspace{-5pt}
    \begin{subfigure}[t]{0.325\textwidth}
        \centering
        \includegraphics[width=\textwidth]{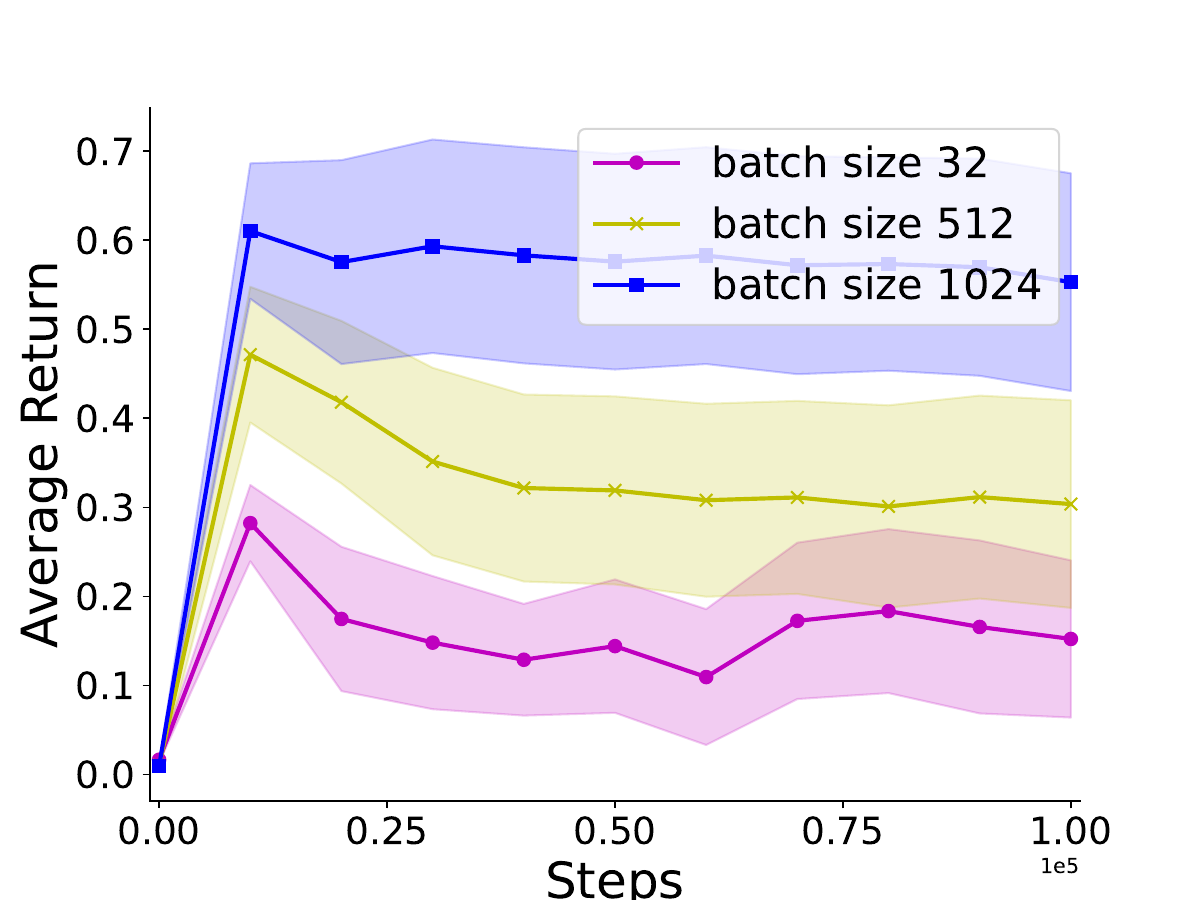}
    \end{subfigure}
\end{center} 
\caption{\textbf{Left:} Performance of PPO with and without advantage normalization in the Key2Door evaluation environment. \textbf{Right:} Performance of PPO with different batch sizes in the Key2Door evaluation environment.} 
\vspace{-5pt}
\label{fig:normalize_batchsize} 
\end{wrapfigure}
Figure \ref{fig:qvsa} shows the performance curves in all three environments for PPO. Agents trained using the advantage function (green) perform well in both the training and evaluation environments. In contrast, agents trained on the $Q$-value (magenta) perform poorly in the evaluation environments. Similar results using REINFORCE are reported in Appendix \ref{ap:reinforce}.

The plot on the left of Figure \ref{fig:normalize_batchsize} reveals how, as discussed in Section \ref{sec:practical_considerations}, normalizing the advantages removes their scaling effect and results in agents being unable to perform well on the evaluation environment. The plot on the right, on the other hand, shows how the performance of policies trained on the $Q$ value improves as we increase the batch size, suggesting that the problem of state-action pair imbalance can sometimes be partly mitigated by using larger batch sizes. Results for the other two environments are provided in Appendices \ref{ap:advantage_normalization} and \ref{ap:batch_size}.

The heatmaps in Figure \ref{fig:heatmaps} show the KL divergence of action probabilities between the agent having the key and not having the key at each of the six locations, measured at different training steps (top: 10k steps, middle: 50k steps, bottom: 100k steps). 

\begin{wrapfigure}{r}{0.65\textwidth} 
\vspace{-10pt}
\begin{center} 
    \begin{subfigure}[t]{0.325\textwidth}
        \centering
        \includegraphics[width=\textwidth]{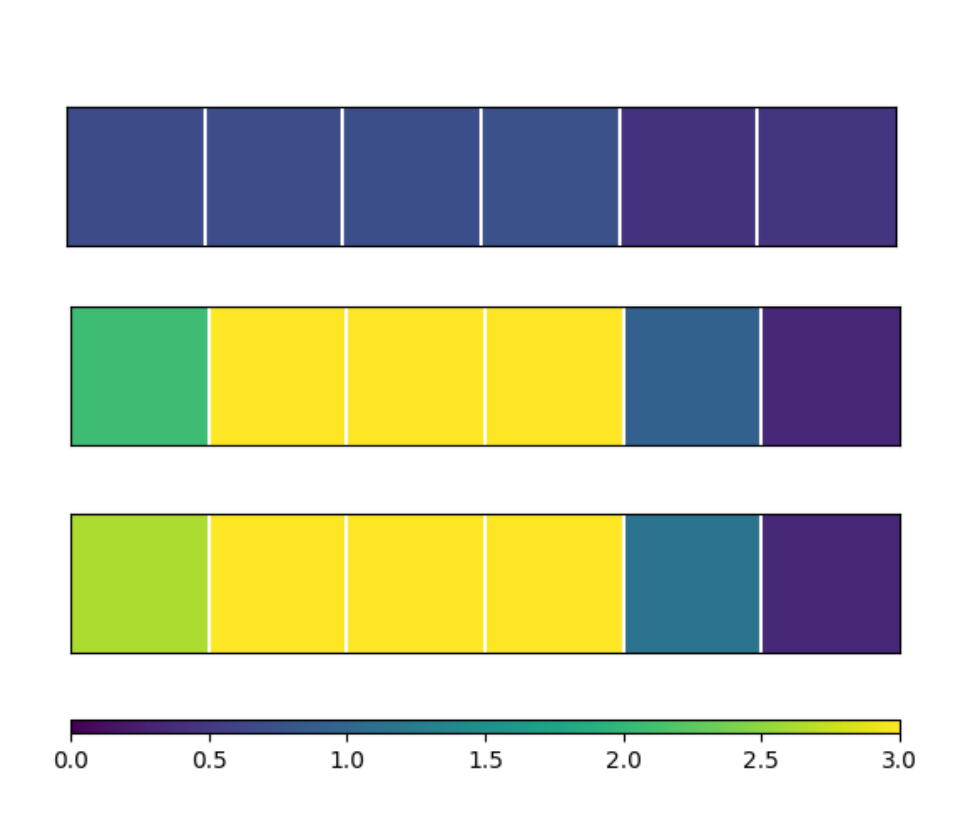}
        % \vspace{-10pt}
    \end{subfigure}
    % Second plot
    \hspace{-5pt}
    \begin{subfigure}[t]{0.325\textwidth}
        \centering
        \includegraphics[width=\textwidth]{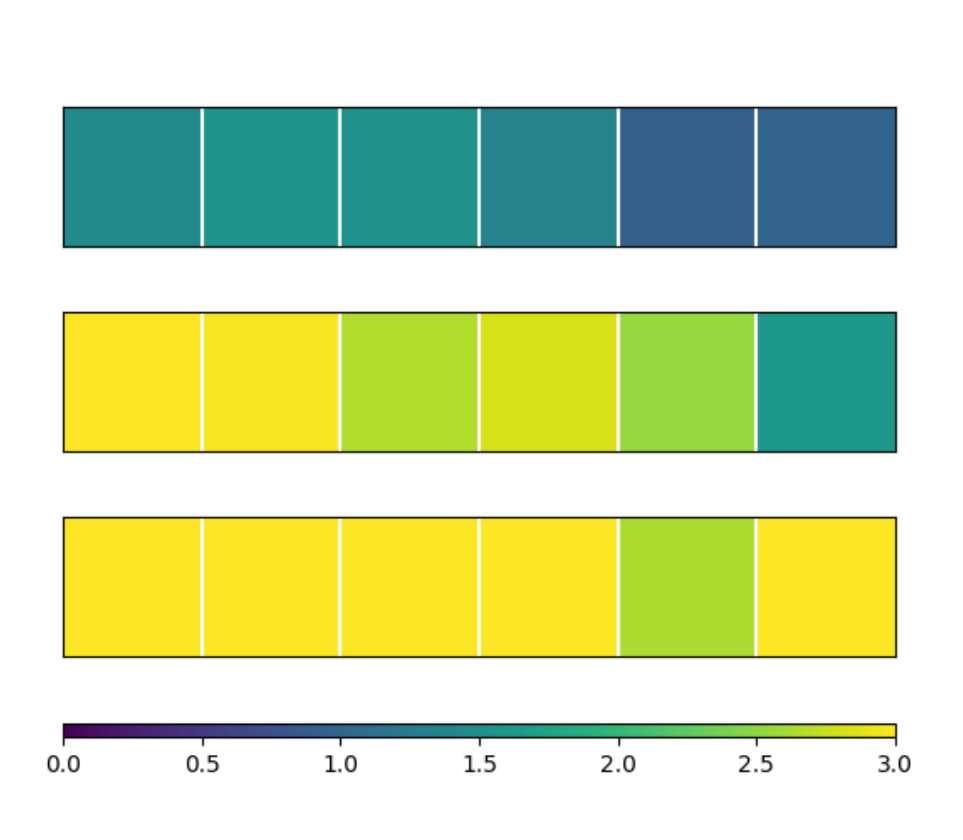}
    \end{subfigure}
\end{center} 
\vspace{-5pt} 
\caption{KL divergence of action probabilities with and without the key, measured at different training steps (top 10k steps, middle 50k steps, and bottom 100k steps) for agents trained on the $Q$-value (left) and the Advantage function (right).} 
% \vspace{-15pt} 
\label{fig:heatmaps} 
\end{wrapfigure}The heatmap on the left corresponds to agents trained on the $Q$-value function, while the one on the right corresponds to agents trained on the advantage. Higher KL divergence indicates that the policy places more importance on the key feature. The results reveal that agents trained to optimize $Q$-value may learn policies that do not necessarily rely on the causal factors due to policy confounding, while policies trained on advantage function may be less affected by this phenomenon. Similar results for the Frozen T-Maze and Diversion environments are provided in Appendix \ref{ap:heatmaps}.

\section{Limitations}
Our analysis assumes discrete state and action spaces, although the results could be extended to continuous domains. In Section \ref{sec:advantage_phi}, when defining the advantage function under a given state representation $\Phi$ (Definition \ref{def:advantage_phi}), we assume that the value function $Q(s_t, a_t)$ is estimated with respect to the full state, while $V^\pi(\Phi(s))$ is defined for the specific state represantion. This assumption is justified in settings where $Q^\pi$ is at least partly estimated from Monte Carlo rollouts, as is common in many actor-critic methods. However, if $Q$ were instead learned entirely via function approximation, then it too would be subject to $\Phi$, and Theorem \ref{thm:advantage_phi} would no longer hold in its current form.

The experiments were conducted in the same three environments introduced by \citet{suau2024bad}, which were specifically designed to expose the phenomenon of policy confounding. While these environments are deliberately simple to facilitate analysis, this simplicity limits the generalizability of our findings. As such, we draw no conclusions about the effectiveness of the advantage function for learning causal representations in more complex or high-dimensional domains. Addressing this question would require further empirical investigation and is left for future work.

Finally, throughout the paper, we have taken care not to make strong claims about the effectiveness of the advantage function in learning causal state representations. Rather, we argue that the advantage function can mitigate policy confounding and support the formation of more causally grounded representations. However, its use provides no guarantees that the resulting representations will be truly causal.

\section{Conclusion}
In this paper, we analyzed the role of the advantage function in helping agents learn causal state representations. We showed that the advantage function scales the gradients by the complement of the probability of the corresponding state-action pair. This increases the magnitude of the gradients for state-action pairs that are less likely under the current policy while decreasing it for those that are more likely. As a result, it downweights the impact of state-action pairs that are overrepresented in training batches while amplifying the impact of those that are underrepresented. This helps break spurious correlations introduced by the policy, allowing agents to focus on the true causal factors. Section \ref{sec:advatange_example} provides a detailed numerical example illustrating this effect.

Our experiments on the Key2Door, Frozen T-Maze, and Diversion environments confirmed that training on advantages leads to more robust agents that generalize better out-of-trajectory. Furthermore, as explained in Section \ref{sec:practical_considerations}, our empirical results reveal how implementation choices, such as batch size and advantage normalization, affect the learned representations. Finally, the KL-divergence analysis of the action probabilities further demonstrates that using the advantage function makes agents more reliant on causal factors.

% \section{Discussion}

% Hence, our contribution is to emphasize that the advantage function plays a more profound role than just variance reduction. It provides a mechanism that influences the balance of gradient contributions, which is crucial for efficient learning dynamics, especially when dealing with sparse or highly probable actions.

% As discussed in the previous section, policies may inadvertently create feedback loops between past and future observation variables as a result of influencing both. These feedback loops can cause spurious correlations that the agent may pick up on. 

% The key insight is that the advantage function naturally corrects for the sampling distribution through the $1 - \pi(a|s_t)$ term, automatically adjusting the learning signal based on action probabilities. The problem with the Q value is that it reinforces already learned behaviors. Notice how for a fixed value of Q the advantage of actions that are less likely under a given policy is higher than those which are less likely.

\bibliography{main}
\bibliographystyle{apalike}

\newpage

\newpage
\appendix

\section{Proofs}\label{ap:proofs}
\theoremone*
\begin{proof}

    \begin{equation}
\begin{aligned}
    A^\pi_\Phi(s_t, a_t) &= Q^\pi(s_t, a_t) - V^\pi(\Phi(s_t)) \\
    &= Q^\pi(s_t, a_t) - \sum_{s'_t \in \{s_t\}^\Phi} P^\pi(s'_t \mid \Phi(s_t)) V^\pi(s'_t)\\
    &= Q^\pi(s_t, a_t) - \sum_{s'_t \in \{s_t\}^\Phi} P^\pi(s'_t \mid \Phi(s_t))\sum_{a' \in \mathcal{A}} \pi(a' \mid s'_t) Q^\pi(s'_t, a') \\
    & = Q^\pi(s_t, a_t) - \sum_{\substack{s'_t \in \{s_t\}^\Phi, a'_t \in \mathcal{A}}} P^\pi(s'_t, a'_t \mid \Phi(s_t)) Q^\pi(s'_t, a'_t) \\
    & = (1 - P^\pi(s_t, a_t \mid \Phi(s_t))) Q^\pi(s_t, a_t) - \sum_{ s'_t, a'_t  \neq  s_t, a_t } P^\pi(s'_t, a'_t \mid \Phi(s_t)) Q^\pi(s'_t, a'_t) \\
    &= (1 - P^\pi(s_t, a_t \mid \Phi(s_t))) Q^\pi(s_t, a_t) - (1 - P^\pi(s_t, a_t \mid \Phi(s_t))) \tilde{Q}^\pi(\neg \langle s_t, a_t \rangle)) \\
    &= (1 - P^\pi(s_t, a_t \mid \Phi(s_t)))( Q^\pi(s_t, a_t) - \tilde{Q}^\pi(\neg \langle s_t, a_t \rangle)).
\end{aligned}
\end{equation}
\normalsize
since

\begin{equation}
\begin{aligned}
    \tilde{Q}^\pi(\neg \langle s_t, a_t \rangle)) &= \frac{\sum_{ s'_t, a'_t  \neq  s_t, a_t } P^\pi(s'_t, a'_t \mid \Phi(s_t)) Q^\pi(s'_t, a'_t)}{\sum_{ s'_t, a'_t  \neq  s_t, a_t } P^\pi(s'_t, a'_t \mid \Phi(s_t))} \\ 
    &= \frac{\sum_{ s'_t, a'_t \neq s_t, a_t } P^\pi(s'_t, a'_t \mid \Phi(s_t))Q^\pi(s'_t, a'_t)}{1 - P^\pi(s_t, a_t \mid \Phi(s_t))}.
\end{aligned}
\end{equation}
\end{proof}

\section{Experimental results}\label{ap:experimental_results}
\subsection{Results using REINFORCE}\label{ap:reinforce}
Figure \ref{fig:qvsa_reinforce} shows the performance using REINFORCE in the training and evaluation variants of the three environments. 
\begin{figure}[H]
% \vspace{-10pt}
    \centering
    % First plot
    \begin{subfigure}[t]{0.32\textwidth}
        \centering
        \includegraphics[width=\textwidth]{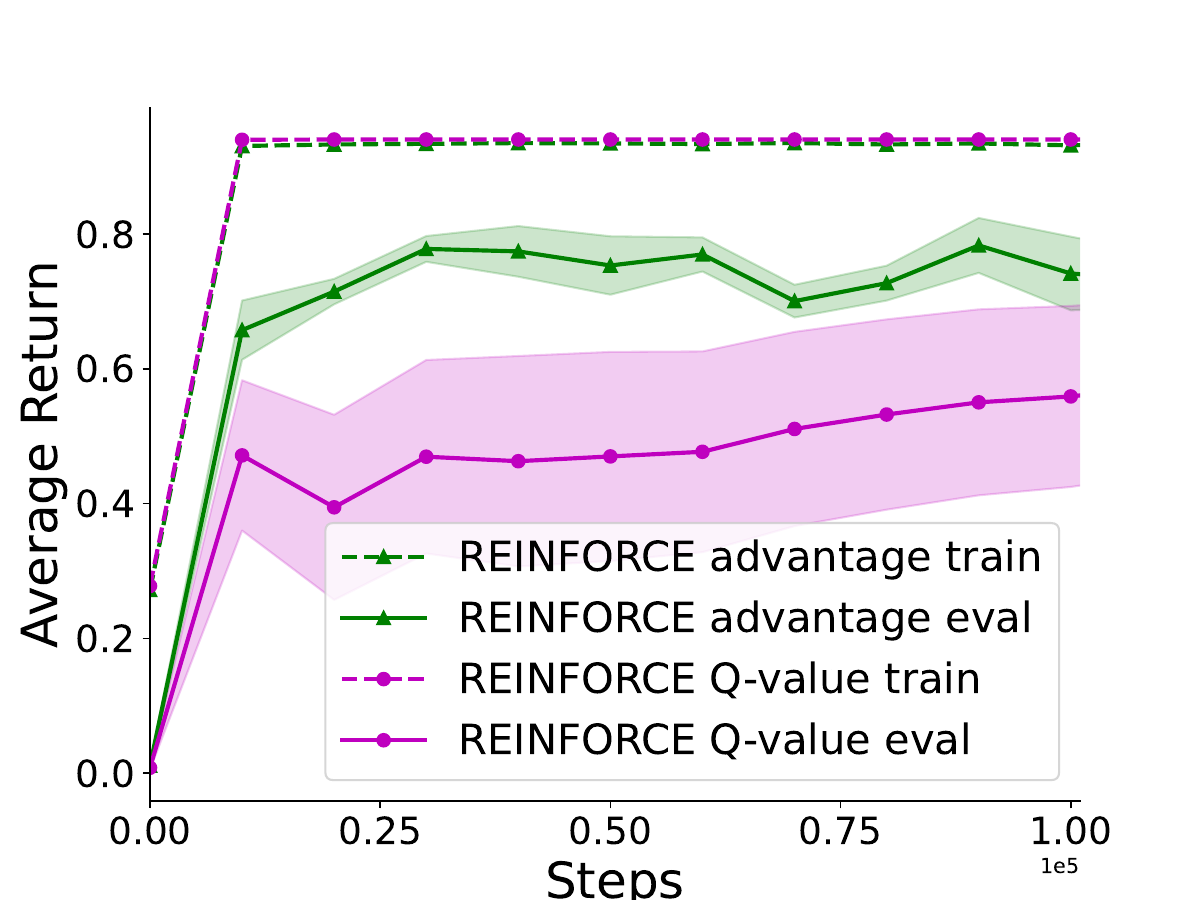}
    \end{subfigure}
    \hfill
    % Second plot
    \begin{subfigure}[t]{0.32\textwidth}
        \centering
        \includegraphics[width=\textwidth]{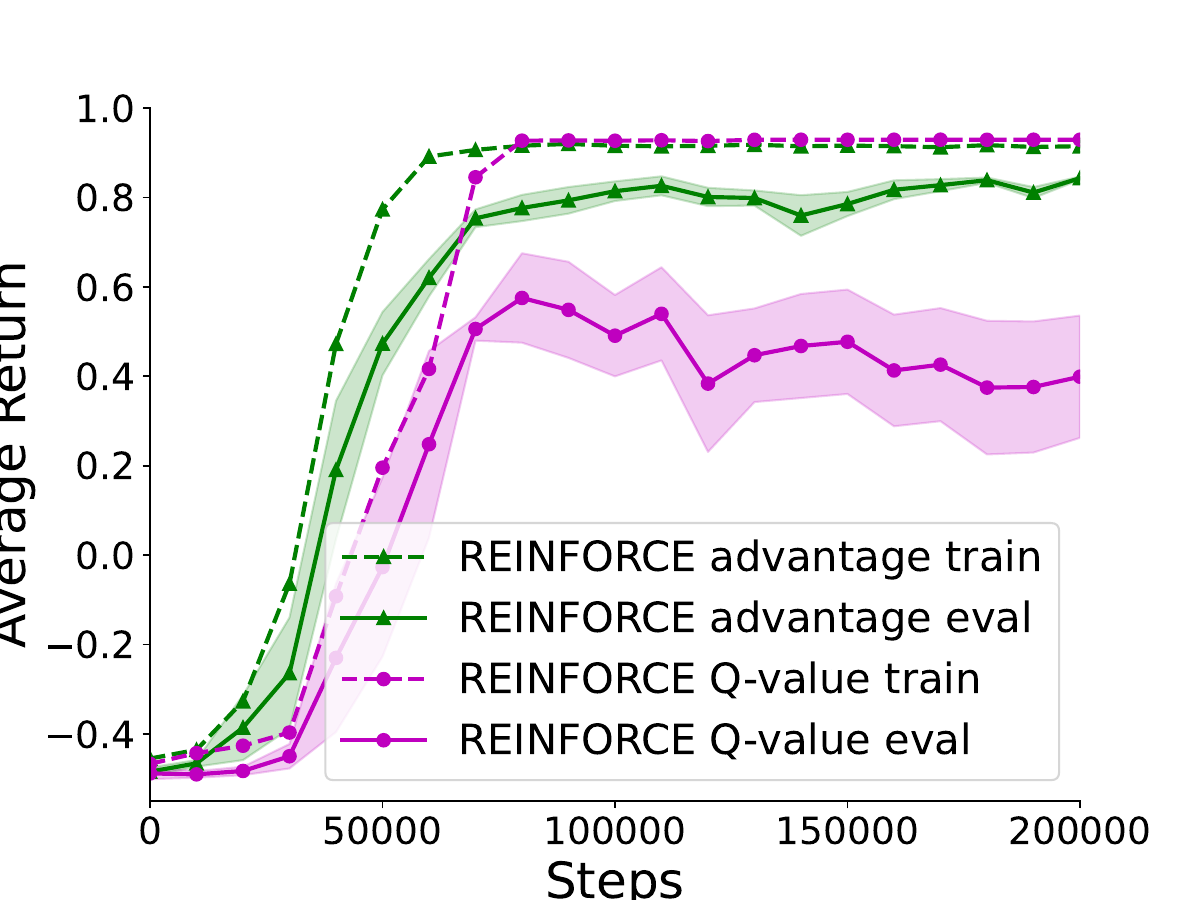}
    \end{subfigure}
    \hfill
    % Third plot
    \begin{subfigure}[t]{0.32\textwidth}
        \centering
        \includegraphics[width=\textwidth]{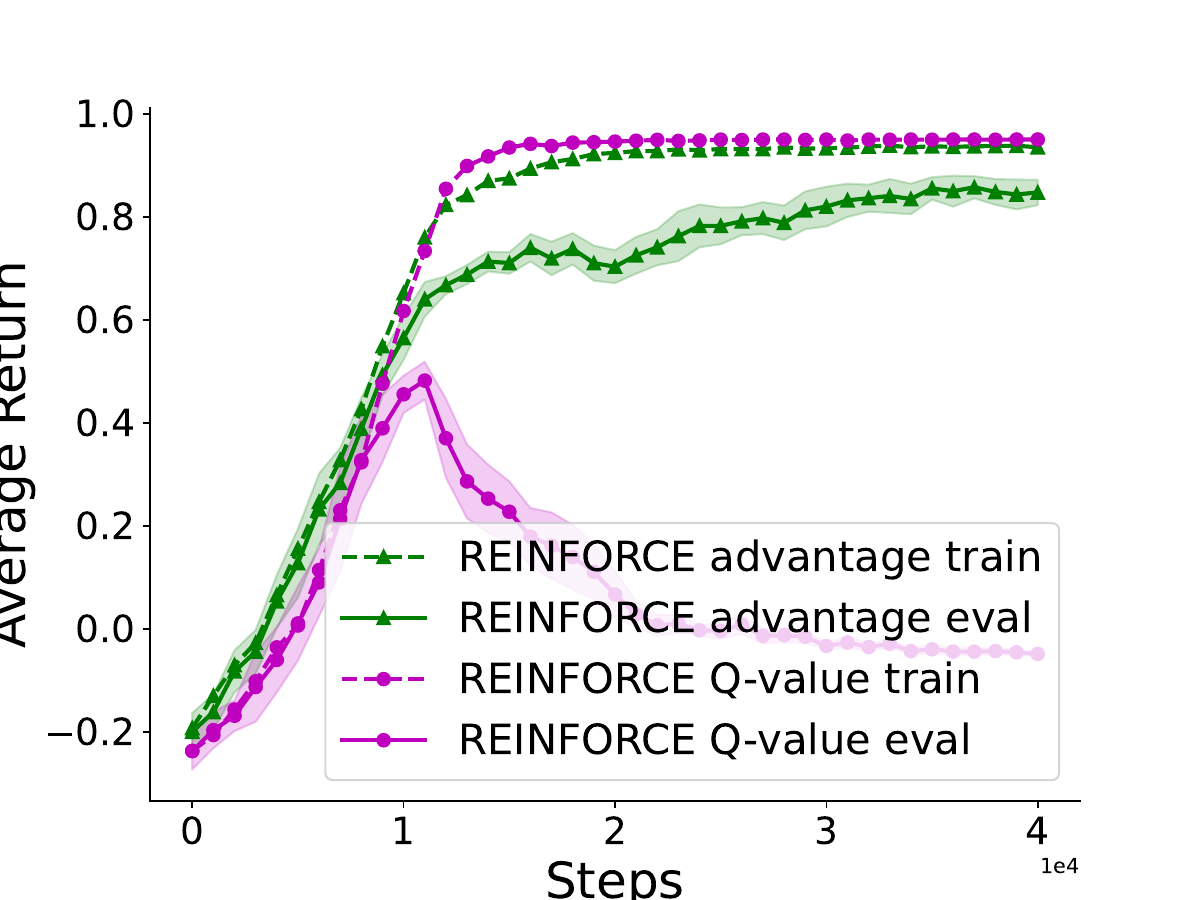}
    \end{subfigure}
    \hfill
    % \vspace{-10pt}
    \caption{Performance of REINFORCE using Q-values and Advantages in both the training and evaluation variants of the Key2Door (first plot), Frozen T-Maze (second plot), and Diversion (third plot) environments.
}
\vspace{-10pt}
    \label{fig:qvsa_reinforce}
\end{figure}
\subsection{Advantage normalization}\label{ap:advantage_normalization}
Figure \ref{fig:normalize_all} compares the performance of PPO with and without advantage normalization in the evaluation variants of the three environments.

\begin{figure}[H]
% \vspace{-10pt}
    \centering
    % First plot
    \begin{subfigure}[t]{0.32\textwidth}
        \centering
        \includegraphics[width=\textwidth]{figures/keydoor_normalization.pdf}
    \end{subfigure}
    % \hfill
    % Second plot
    \begin{subfigure}[t]{0.32\textwidth}
        \centering
        \includegraphics[width=\textwidth]{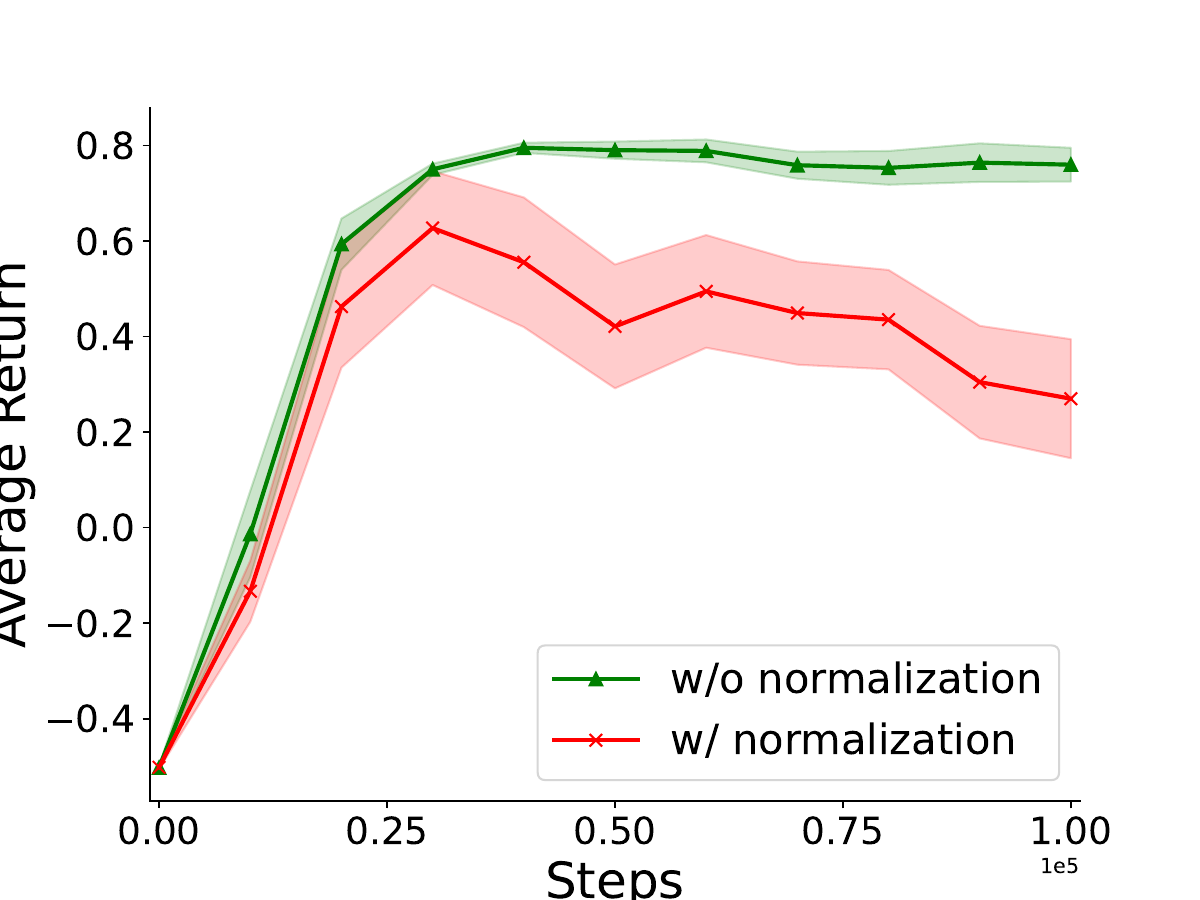}
    \end{subfigure}
    % Third plot
    \begin{subfigure}[t]{0.32\textwidth}
        \centering
        \includegraphics[width=\textwidth]{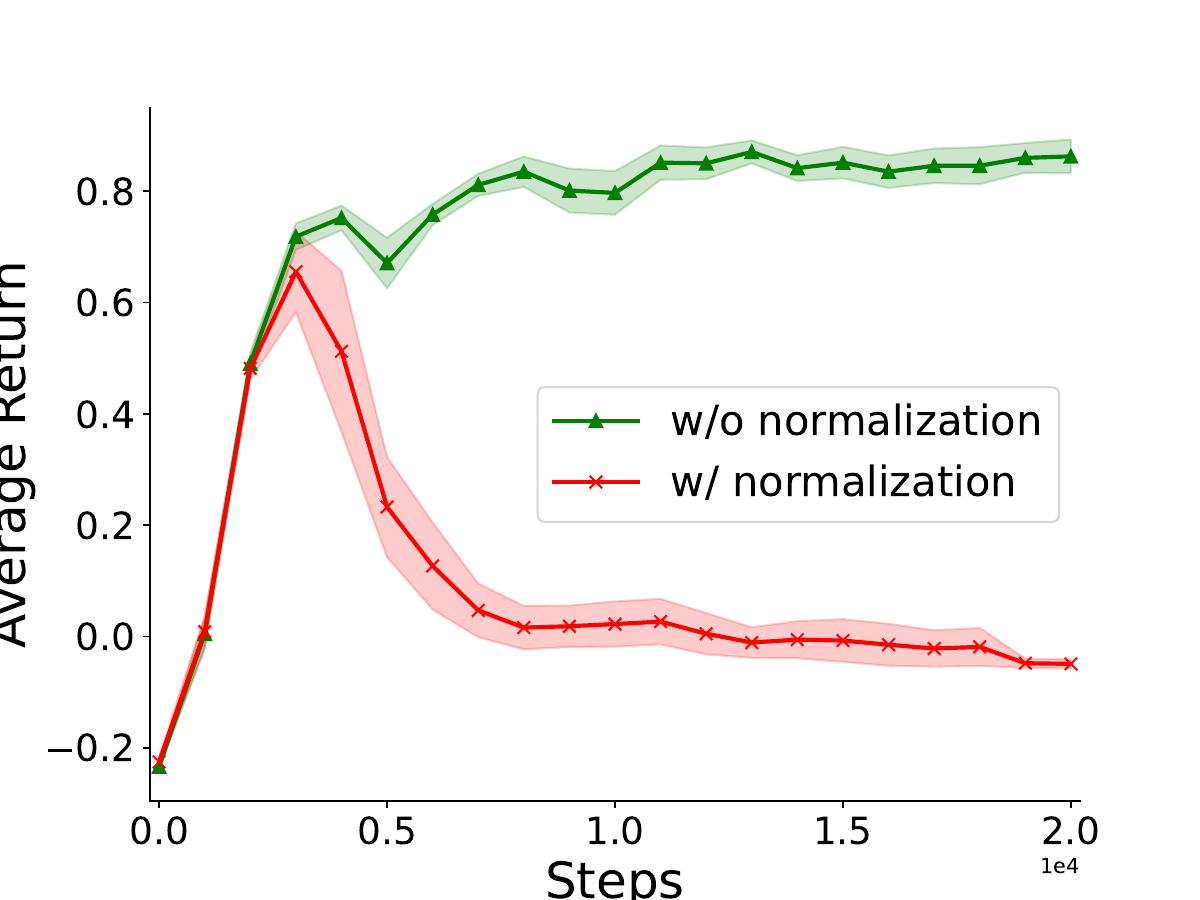}
    \end{subfigure}
    \hfill
    % \vspace{-10pt}
    \caption{Performance of PPO with and without advantage normalization in the evaluation variants of the Key2Door (first plot), Frozen T-Maze (second plot), and Diversion (third plot) environments.}
\vspace{-10pt}
    \label{fig:normalize_all}
\end{figure}
\subsection{Batch size}\label{ap:batch_size}
Figure \ref{fig:batch_size_all} compares the performance of PPO with different batch sizes in the evaluation variants of the three environments. While increasing the batch size seems to help in the Key2Door and Diversion environments, it has little effect in the Frozen T-Maze environment.
\begin{figure}[ht]
% \vspace{-10pt}
    \centering
    % First plot
    \begin{subfigure}[t]{0.32\textwidth}
        \centering
        \includegraphics[width=\textwidth]{figures/keydoor_batchsize.pdf}
    \end{subfigure}
    \hfill
    % Second plot
    \begin{subfigure}[t]{0.32\textwidth}
        \centering
        \includegraphics[width=\textwidth]{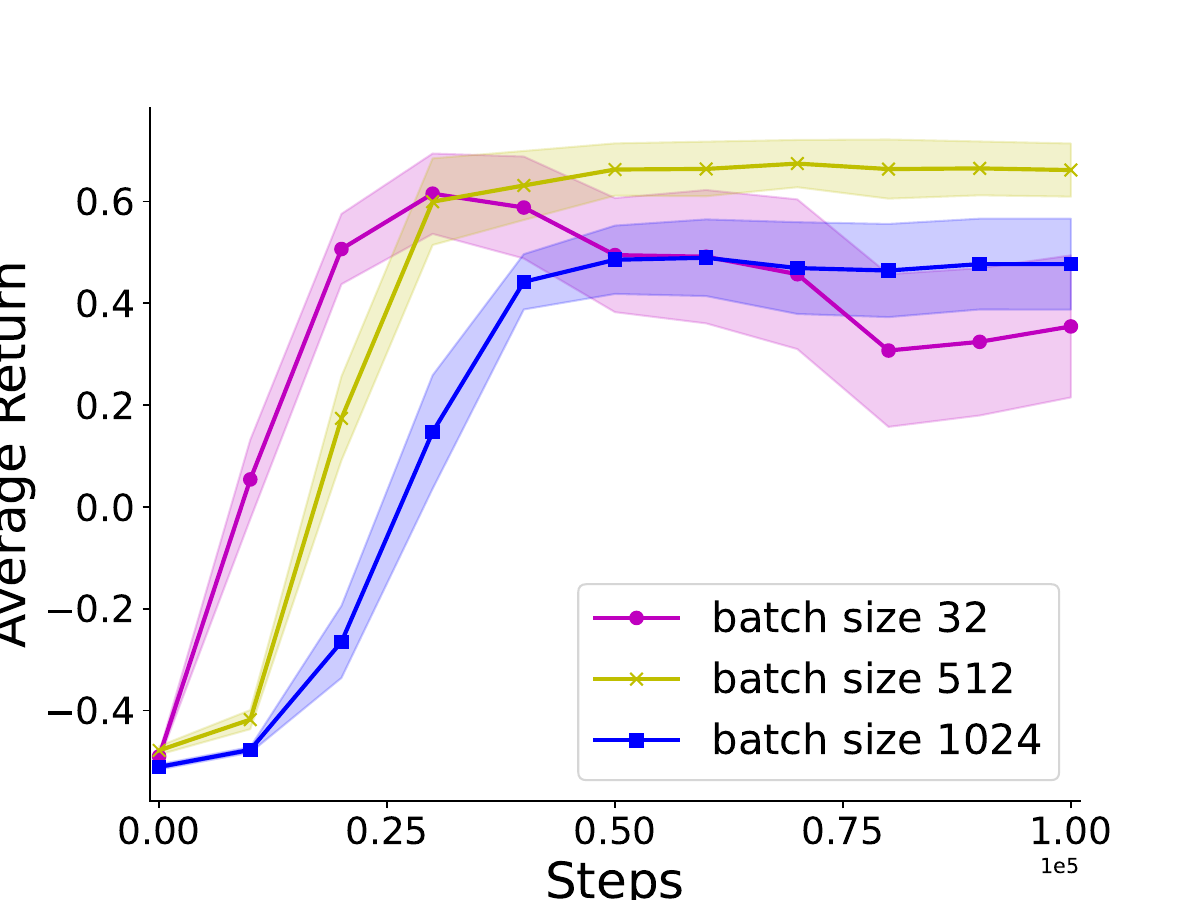}
    \end{subfigure}
    \hfill
    \begin{subfigure}[t]{0.32\textwidth}
        \centering
        \includegraphics[width=\textwidth]{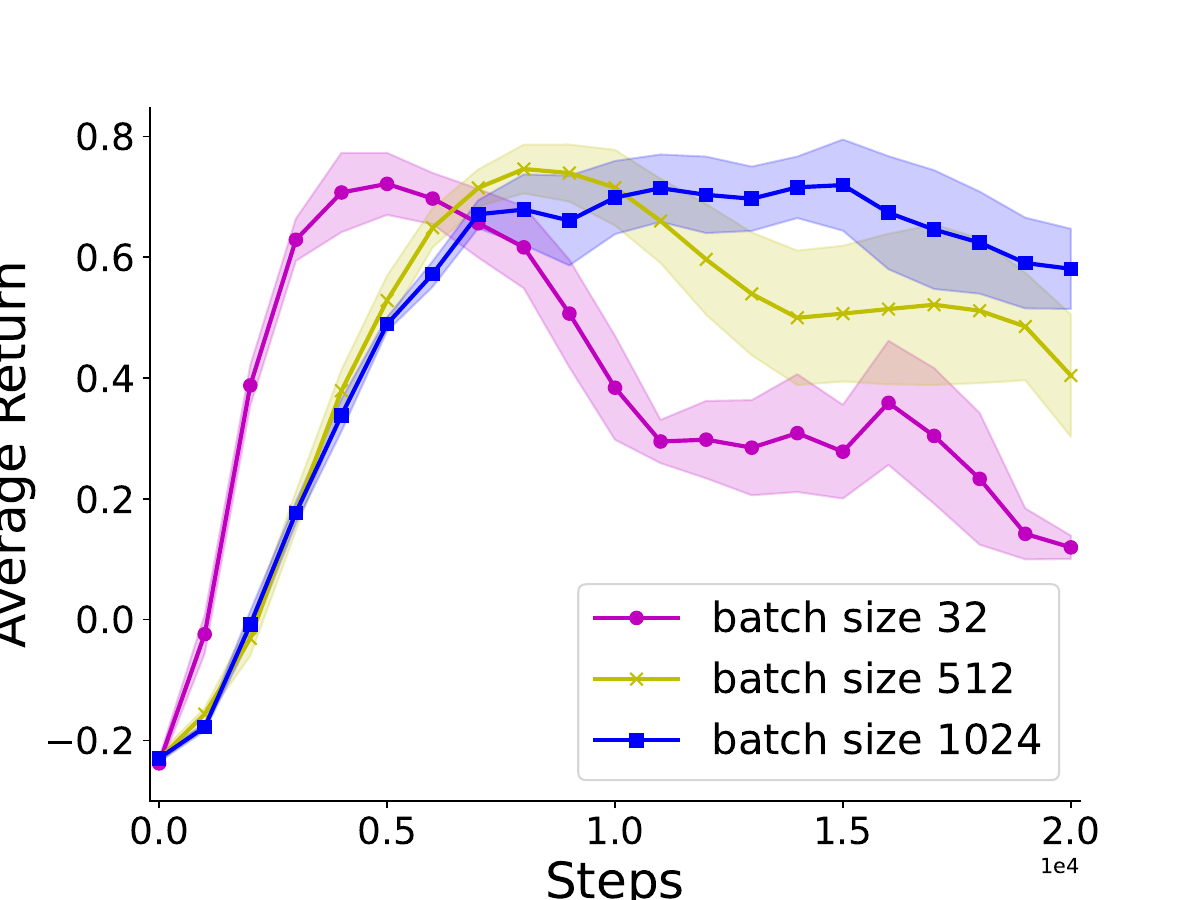}
        \label{fig:plot2}
    \end{subfigure}
    % \vspace{-10pt}
    \caption{Performance of PPO with different batch sizes in the evaluation variants of the Key2Door (first plot), Frozen T-Maze (second plot), and Diversion (third plot) environments.}
\vspace{-10pt}
    \label{fig:batch_size_all}
\end{figure}
\subsection{Policy KL divergence}\label{ap:heatmaps}

\paragraph{Frozen T-Maze}
Figure \ref{fig:heatmaps_tmaze} shows the KL divergence of action probabilities in the Frozen T-Maze environment, produced by the policy when the signal is either purple or green, measured at different training steps (top: 10k steps, middle: 50k steps, bottom: 100k steps). To compute these divergences, we take the observation stack received by the agent and query the policy network twice: once with the original stack, and once with the signal bit flipped to the opposite value. The KL divergence at each cell is the average over 100 evaluation episodes.

The left heatmaps reveal that after 100k training steps, the agent trained on the $Q$-value largely ignores the signal, except at the starting location. In contrast, the agent trained on the advantage function conditions its action choices on the signal value throughout the maze.

Interestingly, the left heatmaps also reveal that as training progresses, the trajectories followed by the $Q$-value-trained agent become increasingly deterministic. By the end of training, the agent consistently chooses the top path when the signal is green and the bottom path when the signal is purple. In contrast, the agent trained on the advantage function continues to follow a diverse set of trajectories and does not exhibit a strong preference for any particular path. Note that multiple optimal paths exist for each signal value.
\begin{figure}[H]
% \vspace{-10pt}
    \centering
    % First plot
    \begin{subfigure}[t]{0.45\textwidth}
        \centering
        \includegraphics[width=\textwidth]{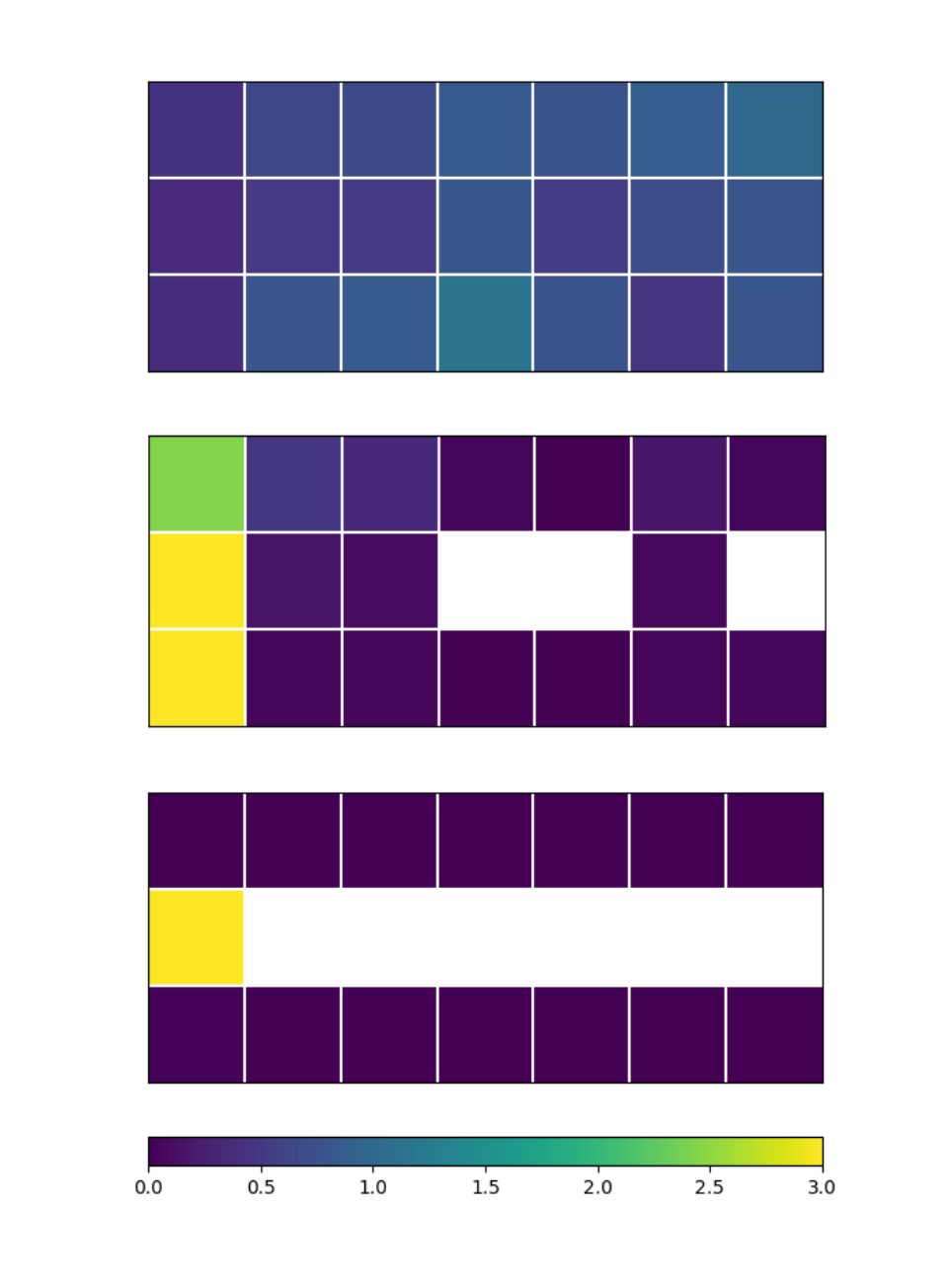}
    \end{subfigure}
    % \hfill
    % Second plot
    \begin{subfigure}[t]{0.45\textwidth}
        \centering
        \includegraphics[width=\textwidth]{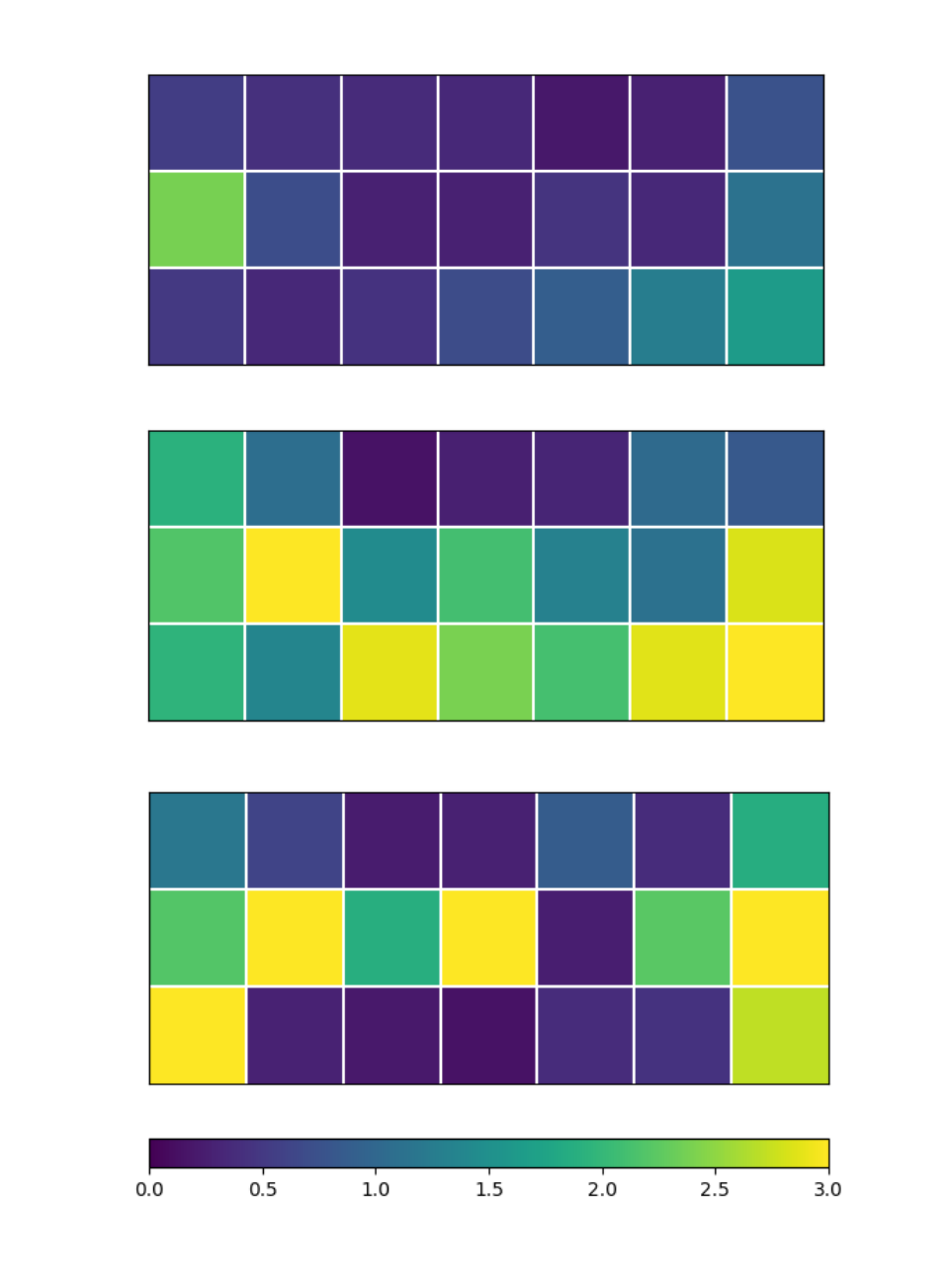}
    \end{subfigure}
    \vspace{-10pt}
    \caption{KL divergence of action probabilities in the Frozen T-Maze environment at different training steps (top 10k steps, middle 50k steps, and bottom 100k steps) for agents trained on the $Q$-value (left) and the Advantage function (right).}
% \vspace{-10pt}
    \label{fig:heatmaps_tmaze}
\end{figure}

\paragraph{Diversion}

Figure \ref{fig:KL_diversion} shows the policy KL divergence across each column in the Diversion environment, measured at different training steps (top: 3k steps, middle: 10k steps, bottom: 20k steps). For each column, the divergence is computed by comparing the action probabilities output by the agent’s policy when positioned in the top versus bottom row.

The left heatmaps indicate that the agent trained on the $Q$-value learns to ignore the bit indicating the row after 10k training steps. In contrast, the agent trained on the advantage function continues to use the row information when deciding which action to take.

\begin{figure}[H]
% \vspace{-10pt}
    \centering
    % First plot
    \begin{subfigure}[t]{0.45\textwidth}
        \centering
        \includegraphics[width=\textwidth]{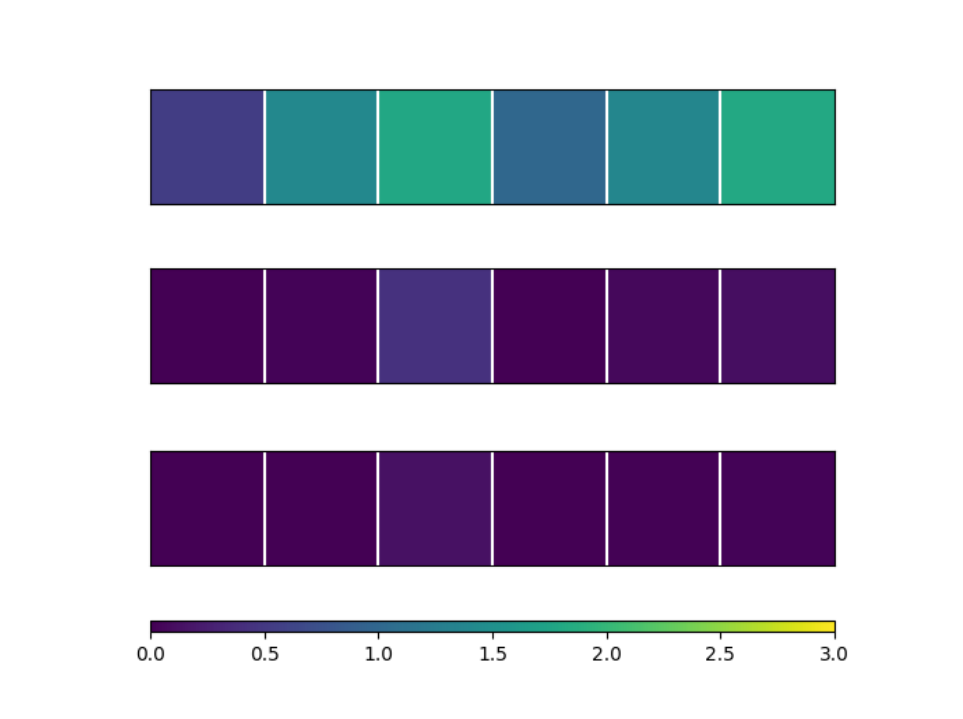}
    \end{subfigure}
    % \hfill
    % Second plot
    \begin{subfigure}[t]{0.45\textwidth}
        \centering
        \includegraphics[width=\textwidth]{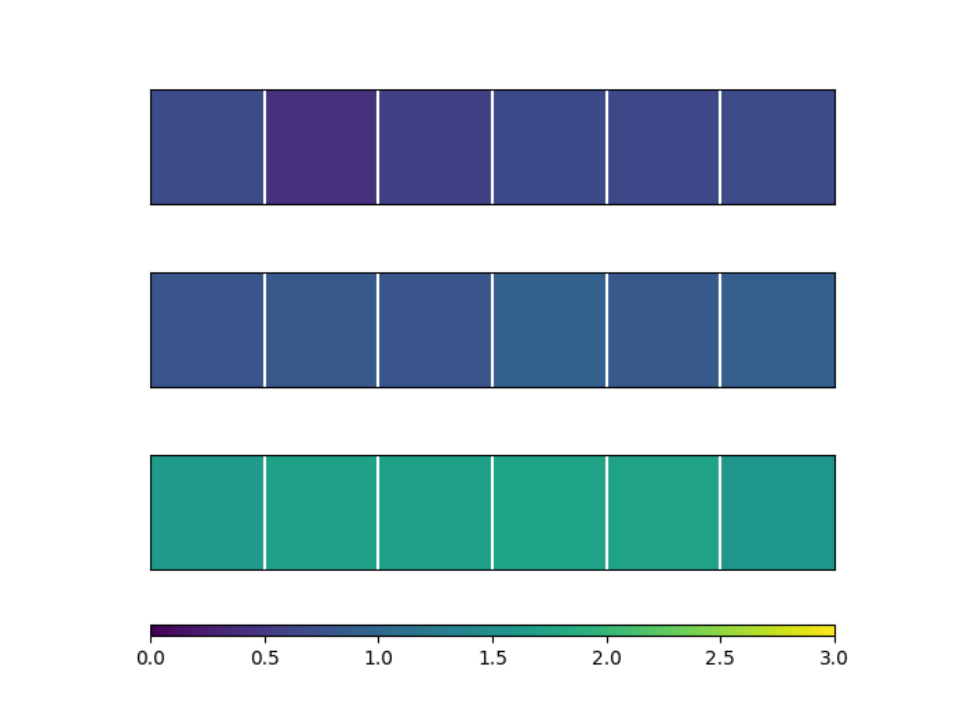}
    \end{subfigure}
    % \vspace{-10pt}
    \caption{KL divergence of action probabilities in the Diversion environment measured at different training steps (top 3k steps, middle 10k steps, and bottom 20k steps) for agents trained on the $Q$-value (left) and the Advantage function (right).}
% \vspace{-10pt}
    \label{fig:KL_diversion}
\end{figure}

\section{Environments}\label{ap:environments}
\begin{figure}[H]
% \vspace{pt}
     \centering
     \begin{subfigure}[b]{0.33\textwidth}
         \centering    \includegraphics[width=\textwidth]{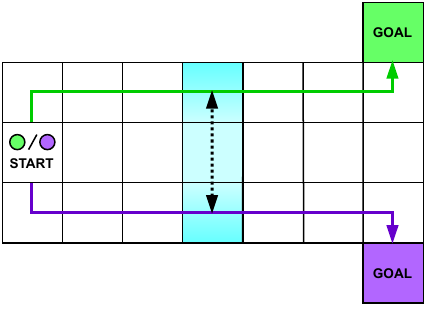}
         % \vspace{0.8pt}
     \end{subfigure}
     \hspace{40pt}
     \begin{subfigure}[b]{0.45\textwidth}
         \centering
\includegraphics[width=0.95\textwidth]{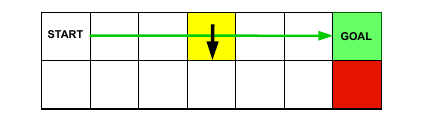} 
    \vspace{22pt}
     \end{subfigure}
      \caption{Illustrations of the Frozen T-Maze (left) and Diversion (right) environments.}
      \label{fig:frozen_offtrack}
      
\end{figure}
\paragraph{Frozen T-Maze}
This environment is a variant of the popular T-Maze setting \citep{bakker2001reinforcement}. At the starting location, the agent receives a binary signal: either green or purple. Its task is to navigate to the right and reach the correct goal at the end of the maze. The agent receives a reward of $+1$ for reaching the green (purple) goal when the green (purple) signal was observed, and a penalty of $-1$ otherwise. Additionally, a time penalty of $-0.01$ is applied at each timestep to encourage the agent to take the most direct path to the goal.

At every timestep, the agent observes its current location within the maze, represented by a one-hot-encoded vector. However, the initial signal, represented by a binary variable, is only provided to the agent at the starting location. Crucially, the agent is capable of remembering past observations. When moving randomly, it must retain the initial signal throughout its trajectory. However, once it learns the shortest path to each goal (illustrated by the green and purple arrows), the agent can safely disregard the initial signal. This is because the agent can infer the signal based on its location: if it is on the green (purple) path, it must have received the green (purple) signal. Note that the two paths highlighted in Figure~\ref{fig:frozen_offtrack} are not the only optimal ones. However, for the agent to disregard the initial signal, the paths must not overlap. 

We train agents in the original environment and evaluate them in a modified version where an icy surface (shown in blue) is introduced in the middle of the maze. This ice causes the agent to slip between the upper and lower cells.

\paragraph{Diversion}
In this environment, the agent must move from the start state to the goal state shown in Figure~\ref{fig:frozen_offtrack} (right). A reward of $+1$ is given for reaching the goal, and a penalty of $-1$ is incurred if the agent reaches the red cell instead. As in the other environments, there is a per-timestep penalty of $-0.01$. Observations are 8-dimensional binary vectors: the first 7 elements indicate the column where the agent is located, and the last element indicates the row.

After the agent learns the optimal policy (shown by the green arrow), it can ignore the last element of the observation vector. This is because the optimal policy never visits the bottom row. We train the agent in the original environment and evaluate it in a modified version containing a yellow diversion sign in the middle of the maze, which forces the agent to move to the bottom row.

\section{Experimental setup}\label{ap:experimental_setup}

The experiments were run on a laptop equipped with an Apple M2 Pro processor (12 cores) and 16 GB of RAM. Each run took less than 5 minutes and used at most 2\% of the total RAM.

Agents were trained using Stable Baselines3 \citep{stable-baselines3}. The hyperparameters are listed in Tables~\ref{tab:ppo} (PPO) and~\ref{tab:reinforce} (REINFORCE). For PPO, we adopted the hyperparameters used by \citet{schulman2017proximal} in their Atari experiments, except for the learning rate, which we increased to 1.0e-3 to accelerate convergence. We implemented a minimal version of the REINFORCE algorithm with only three hyperparameters (learning rate, discount factor, and entropy coefficient), for which we used the same values as in PPO. For the Frozen T-Maze, we used a stack of the past 30 observations as input, since solving the task requires memory.
\begin{center}
\vspace{-10pt}
  \begin{table}[h]
  \centering
  \caption{PPO hyperparameters.}
  \vspace{5pt}
\resizebox{0.7\textwidth}{!}{
  \begin{tabular}{ p{5cm}|p{4cm}}
    % \hline
% \multicolumn{2}{|c|}{PPO} \\
% \hline
%  \hline
 Rollout steps & 128 \\
 Batch size & 32 \\
 Learning rate & 1.0e-3 \\
 Number epoch & 3 \\
 Discount $\gamma$ & 0.99 \\
 GAE $\lambda$ & 0.95 \\
 Entropy coefficient & 1.0e-2 \\
 Clip  range & 0.1 \\
 Value coefficient & 1 \\
 Number Neurons 1st layer & 128 \\
 Number Neurons 2nd layer & 128 \\
% \hline
\end{tabular}
}
\vspace{-20pt}
\label{tab:ppo}
\end{table}
\end{center}

\begin{center}
\vspace{-10pt}
  \begin{table}[ht]
  \centering
  \caption{REINFORCE hyperparameters.}
  \vspace{5pt}
\resizebox{0.7\textwidth}{!}{
  \begin{tabular}{ p{5cm}|p{4cm}}
    % \hline
% \multicolumn{2}{|c|}{PPO} \\
% \hline
%  \hline
 Learning rate & 1.0e-3 \\
 Discount $\gamma$ & 0.99 \\
 Entropy coefficient & 1.0e-2 \\
 Number Neurons 1st layer & 128 \\
 Number Neurons 2nd layer & 128 \\
% \hline
\end{tabular}
}
\vspace{-20pt}
\label{tab:reinforce}
\end{table}
\end{center}

\end{document}